\documentclass[sigconf]{acmart}

\pdfoutput=1

\usepackage{CJKutf8}
\usepackage{array}
\usepackage[utf8]{inputenc} % allow utf-8 input
\usepackage[T1]{fontenc}    % use 8-bit T1 fonts
\usepackage{colortbl}       % table row/column colors

\usepackage{hyperref}       % hyperlinks
\usepackage{url}            % simple URL typesetting
\usepackage{booktabs}       % professional-quality tables
\usepackage{amsfonts}       % blackboard math symbols
\usepackage{nicefrac}       % compact symbols for 1/2, etc.
\usepackage{microtype}      % microtypography
\usepackage{xcolor}         % colors
\usepackage{graphicx}
\usepackage{amsmath}
\usepackage{wrapfig}
\usepackage{color}
\usepackage{multirow}
\usepackage{comment}
\usepackage{caption}

\usepackage{subcaption}
\usepackage{pifont}

\newcommand{\ourmethod}{\textit{SGPO}}

\usepackage[normalem]{ulem}

\definecolor{myyellow}{RGB}{190,144,0}
\definecolor{mygreen}{RGB}{0,136,51}
\definecolor{myblue}{RGB}{0,102,204}

\newtheorem{thm}{\bf Theorem}[section]

\begin{document}

\begin{CJK*}{UTF8}{gbsn}

%%
%% The "title" command has an optional parameter,
%% allowing the author to define a "short title" to be used in page headers.

\title{Explore or Converge?
Stage-Guided Per-Step Optimization for Diffusion Models}

% Renye Yan and Jikang Cheng are co-first authors.

\author{Renye Yan}
\authornote{Renye Yan and Jikang Cheng contributed equally to this work.}
\affiliation{%
  \institution{Peking University}
  \city{Beijing}
  \country{China}
}

\author{Jikang Cheng}
\authornotemark[1]
\affiliation{%
  \institution{Peking University}
  \city{Beijing}
  \country{China}
}

\author{You Wu}
\affiliation{%
  \institution{Nanjing University}
  \city{Nanjing}
  \country{China}
}

\author{Wei Peng}
\affiliation{%
  \institution{Stanford University}
  \city{Stanford}
  \state{California}
  \country{USA}
}

\author{Zongwei Wang}
\affiliation{%
  \institution{Peking University}
  \city{Beijing}
  \country{China}
}

% Ling Liang and Yimao Cai are co-corresponding authors.

\author{Ling Liang}
\authornote{Ling Liang and Yimao Cai are the corresponding authors.}
\affiliation{%
  \institution{Peking University}
  \city{Beijing}
  \country{China}
}

\author{Yimao Cai}
\authornotemark[2]
\affiliation{%
  \institution{Peking University}
  \city{Beijing}
  \country{China}
}

% Short author list used in page headers.

\renewcommand{\shortauthors}{Yan et al.}

\begin{abstract}

Diffusion models have strong generative capabilities. However, their maximum likelihood training objective only focuses on reconstructing the data distribution, making it difficult to align with specific preferences. Reinforcement learning (RL) for preference alignment in diffusion models is promising but limited by reward sparsity. Since a single reward cannot support optimization, existing RL methods usually backpropagate the final reward to all previous steps. However, denoising is stage-wise, with distinct semantics and controllability. Repeating the final reward across all steps creates a temporal objective mismatch, encouraging reward shortcuts that lead to reward hacking. At the same time, due to reward backfilling, each time step receives the same reward, making it impossible to distinguish between actions, thereby weakening the optimization process.
To resolve this issue, we propose Stage-Guided Per-step Optimization (SGPO) for diffusion models, which jointly leverages signal-to-noise ratio and semantic changes to identify generation stages and adaptively assign stage-specific objectives. Early denoising is chaotic and far from the final reward, resulting in weak reward--behavior correlation. This stage should prioritize exiting the chaotic state. In the mid stage, the latent transitions to a stable structure, where the final reward better corresponds to generative behavior. Therefore, this stage optimizes the final reward while exploring diversity to avoid early convergence to a single mode.
In the late stage, the latent's core structure is largely fixed, and preference optimization mainly amplifies local details, risking overfitting. Therefore, stable convergence is preferred to avoid quality degradation. Results from \textbf{16} comparative experiments validate SGPO. Our method achieves \textbf{26.7\%} average gains in generative quality and \textbf{36.7\%} higher convergence speed.

% \textcolor{red}{现在这个名字太土了给我叫SwitchTune,另外可以叫分层强化学习但是考虑一下要引入不这些mm审稿人不一定懂}

\end{abstract}

%%
%% The code below is generated by the tool at http://dl.acm.org/ccs.cfm.
%% Please copy and paste the code instead of the example below.
%%

%%
%% Keywords. The author(s) should pick words that accurately describe
%% the work being presented. Separate the keywords with commas.

\keywords{Language-Vision Conditional Generation, Diffusion Models, Adaptive Stage-wise Optimization}

%% A "teaser" image appears between the author and affiliation
%% information and the body of the document, and typically spans the page.

\begin{CCSXML}

<ccs2012>

   <concept>

       <concept_id>10010147.10010178.10010224.10010245</concept_id>

       <concept_desc>Computing methodologies~Computer vision problems</concept_desc>

       <concept_significance>500</concept_significance>

   </concept>

   <concept>

       <concept_id>10002978</concept_id>

       <concept_desc>Generation Optimization</concept_desc>

       <concept_significance>500</concept_significance>

   </concept>

</ccs2012>

\end{CCSXML}

\ccsdesc[500]{Computing methodologies~Computer vision problems}

\ccsdesc[500]{Generation Optimization}

%%
%% This command processes the author and affiliation and title
%% information and builds the first part of the formatted document.

\maketitle

\section{Introduction}
\label{sec:intro}

\begin{figure}[t]
    \centering
    \includegraphics[width=1\linewidth]{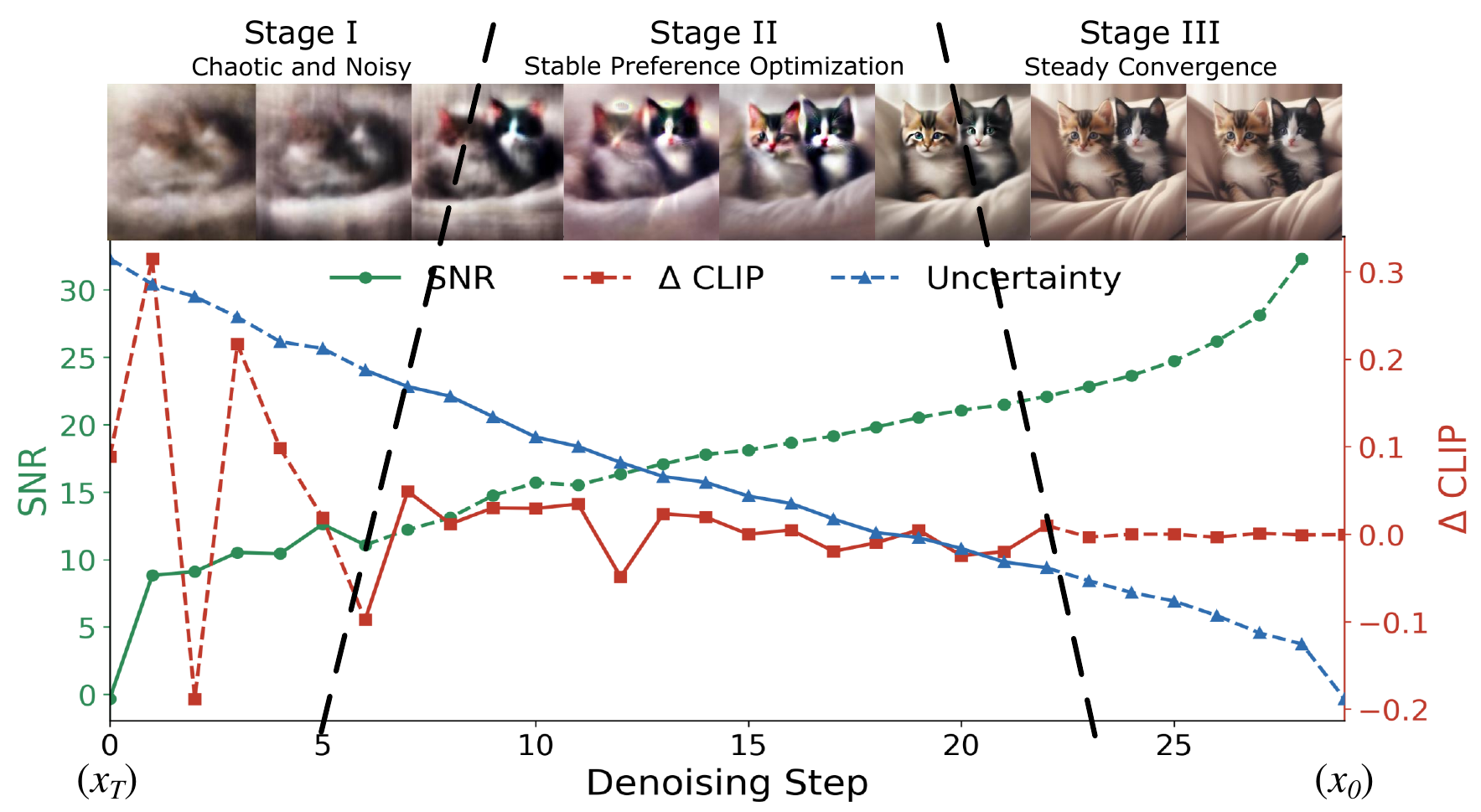}
    \caption{\textbf{Motivation}. We visualize adaptive semantic changes ($\Delta$ CLIP) between adjacent latents and step-wise signal-to-noise ratio (SNR) during denoising, revealing three adaptive stages: chaotic, structure-stable, and structure-convergent.}
    \label{fig:motivation}
\end{figure}

\begin{figure*}[t]
    \centering
    \includegraphics[width=1.0\linewidth]{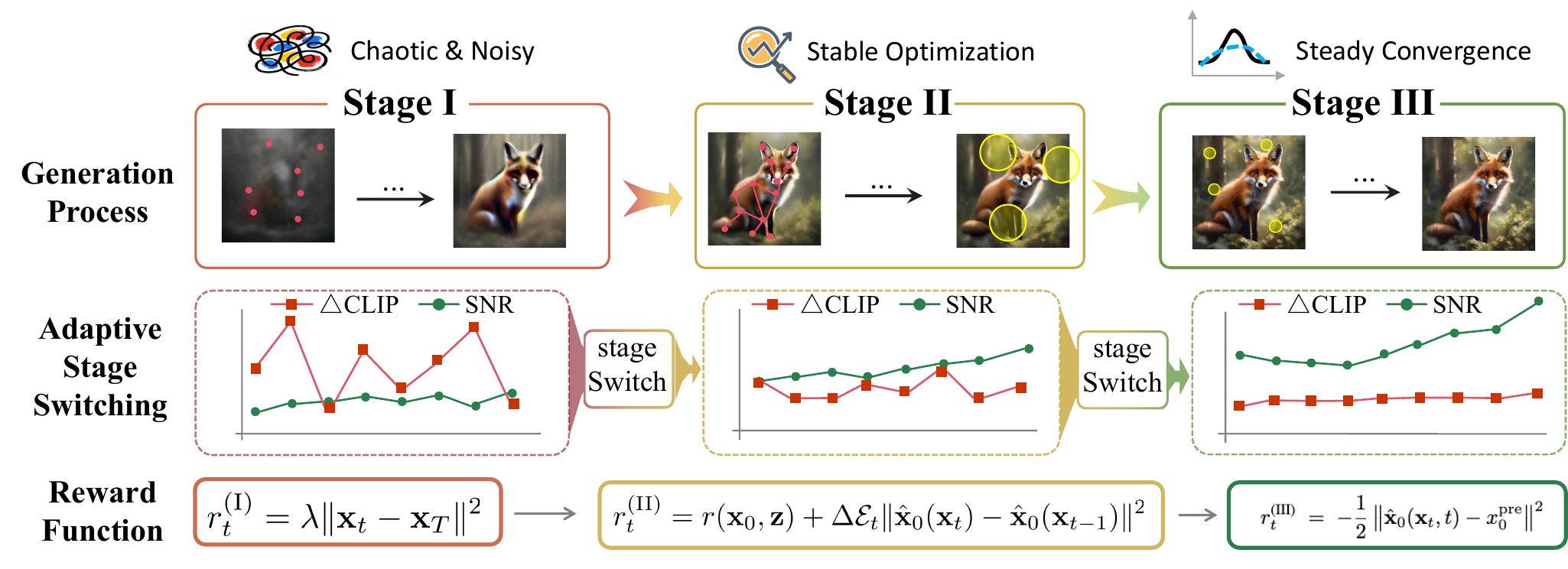}
    \caption{\textbf{Framework of \ourmethod.} As illustrated, RL training is adaptively stage-scheduled via continuous sensing of stage-wise semantic evolution to align with diffusion generation requirements.}
    \label{fig：framework}
    \vspace{-0.3cm}
    % 50k
\end{figure*}

Diffusion models~\citep{zhang2025trustclip,yan2026pixel,yan2026less} have recently achieved significant success in text-to-image generation~\citep{gandikota2025sliderspace} with diverse applications~\citep{zhang2025generalization}. 
However, the training objective of standard diffusion models primarily focuses on fitting real-data distributions, making it challenging to align directly with specific preferences. Since reinforcement learning (RL)~\citep{yan2024exploration,gan2024reflective,gan2024transductive} can directly optimize objective functions, it is regarded as a practical approach for aligning with preferences~\citep{black2023training,yan2025entropy}. However, the preference reward (final reward) can be accurately assessed only upon completion. The intermediate latent states lack direct feedback, resulting in a significant sparsity of rewards during RL training. This issue makes it difficult to support effective optimization using only the final reward (See Appendix Fig.~\ref{fig:rewardback}). Therefore, existing methods typically adopt a reasonable compromise strategy by propagating the fixed final reward backward to all denoising steps, thereby providing each step with the same reward signal~\citep{black2023training,fan2024reinforcement, yang2024using,yan2025entropy}.

Under sparse rewards, backpropagating the final reward is widely used; yet, it assumes that intermediate states across denoising stages are optimally equivalent. However, this assumption is not fully consistent with diffusion generative dynamics, which recent studies show to be stage-wise~\citep{choi2022perception,li2023autodiffusion,yi2024towards,xie2025dymo}. Different denoising stages fundamentally differ in their semantic structure, controllability, and their relationship to final output quality~\citep{yi2024towards,xie2025dymo}. In the early stage of generation, latents are primarily noise-dominated and only weakly correlated with the final reward. During the middle stage, the denoising process gradually reveals stable semantic structures. In the late stage, the overall semantics and structure of the latent are mostly finalized. Thus, indiscriminate final-reward application causes temporal objective mismatch, which encourages reward shortcuts and ultimately leads to reward hacking ~\citep{yan2025entropy,xie2025dymo}.

Mitigating reward hacking requires detecting stage transitions and adaptively aligning optimization with stage-specific generative dynamics and requirements; accordingly, we propose \ourmethod{}, an adaptive stage-aware training paradigm that aligns optimization focus with different phases of diffusion generation. Specifically, our method detects stage transitions by analyzing semantic changes and the signal-to-noise ratio (SNR), thereby dividing diffusion generation into three stages (Fig.~\ref{fig:motivation}). In the early stages, sharp fluctuations between adjacent latents indicate a high-noise chaotic regime, in which action and task reward attribution is unreliable. Thus, \ourmethod{} prioritizes exiting chaos over preference optimization. As semantic changes stabilize, denoising enters the middle stage, where latents approach the generation endpoint and  task reward and action correlation strengthens. Thus, this stage optimizes task (preference) rewards while using exploration to maintain diversity. In the late stage, global semantics and structure are largely fixed, while local details remain sensitive. Further preference optimization no longer alters the main structure but instead amplifies details to fit rewards. This behavior increases the risk of overfitting and destabilizing distributional convergence~\citep{yan2025entropy}. Thus, suppressing unnecessary drift and encouraging stable convergence is more appropriate.

% 共计 xxx 项实验结果表明，通过自适应地区分生成过程中探索与偏好奖励优化阶段，以及收敛引导阶段，所提出的方法在多个关键指标上取得了显著改进。具体而言，在动作与偏好奖励关联性最强、奖励提升最具收益的生成中期阶段，我们对偏好奖励进行了针对性的高质量优化，并引入带方向的探索机制，从而在偏好奖励指标上实现了约 xxx\% 的平均提升，同时显著缓解了奖励劫持问题，并带来了约 xxx\% 的多样性提升。此外，在生成后期通过显式引导生成轨迹向真实数据分布稳定收敛，模型获得了更稳健的生成行为，使其在不同生成任务中能够更可靠地控制整体结构与局部细节。由于避免了在偏好奖励收益有限的末尾阶段进行过度优化，引发yyyyy所提出的方法进一步提升了细节保真度（约提升 xxx）。\textcolor{red}{消融实验请康康给来一个全程进行奖励优化的ddpo，在中间段进行奖励优化的我们的，在末尾和中段都进行的。或者单独把美学的指标代尔塔变化可视化一下换个形式别跟现在多目标那个一样}

% The proposed method achieves significant improvements across multiple key metrics, as evidenced by xxx experimental results. These gains stem from its ability to adaptively distinguish between the optimization stages of exploration, preference reward, and convergence guidance. The mid-generation stage presents the most favorable conditions for optimization, as the correlation between actions and preference rewards is strongest, and enhancing rewards yields the greatest benefits. Leveraging this, we perform targeted, high-quality optimization of the preference reward and introduce a directional exploration mechanism. This leads to an average improvement of approximately xxx\% in preference reward metrics, while significantly alleviating the reward hacking problem and delivering about a xxx\% increase in diversity. In the late-generation stage, the model's trajectories are explicitly guided toward stable convergence with the real data distribution. This targeted stabilization allows more robust generative behavior, enabling reliable control over both global structure and local details across diverse tasks.
In summary, this paper makes the following contributions:
\begin{itemize}
    \item We propose a stage-aware RL framework for diffusion models that departs from uniform optimization by explicitly modeling stage-wise generative dynamics.

    \item We design stage-specific optimization objectives that align training signals with phase-dependent generation characteristics, effectively mitigating reward hacking while improving generation quality and diversity.

    \item Experimental results show that SGPO adaptively identifies diffusion stages and applies stage-specific RL strategies, achieving substantial gains across key metrics. By skipping the chaotic stage and conducting preference optimization and diversity exploration during the intermediate stage, SGPO improves reward scores by approximately 26.7\% and diversity by 8.51\%. This stage-aware optimization enables the training signal to better match the requirements of different generation phases. Stable late-stage convergence further enhances structural and detail controllability, mitigates over-optimization, and improves detail fidelity by 41.93\%.

\end{itemize}

% \begin{figure*}[t]
%     \centering
%     \begin{minipage}[b]{0.3\linewidth}
%         \centering
%         \includegraphics[width=1\linewidth]{sec/figure/reward_backfilling_.pdf}
%         \caption{The comparing results of Backpropagating final reward to 1) \textbf{Last}: the last step. 2) \textbf{Full}: full trajectory (widely adopted by existing methods). 3) \textbf{Part}: only the stable middle stage.}
%         \label{fig:rewardback}
%     \end{minipage}%
%     \hspace{0.04\linewidth} % 小的水平间距
%     \begin{minipage}[b]{0.6\linewidth}
%         \centering
%         \includegraphics[width=1\linewidth]{Image/diversity_ijcai.pdf}
%         \caption{\textbf{Generation diversity comparison}. SGPO yields higher diversity in shape, orientation, and color than baselines under equal training epochs.
%         % \textcolor{red}{3.5和数据集}
%         }
%         \label{fig:diverstiy}
%     \end{minipage}
% \end{figure*}

\section{Related Work}
\subsection{Reinforcement Learning Fine-tuning for Diffusion Models}
Recent advances have explored reinforcement learning (RL)~\citep{kaelbling1996reinforcement,chaudhari2025rlhf,zhang2025survey,tang2025deep,pippas2025evolution,balhara2025survey,michailidis2025reinforcement} as an effective paradigm for fine-tuning diffusion models toward specific objectives, such as aesthetic quality, human preference alignment, or task-specific constraints. Unlike standard diffusion training, which optimizes likelihood-based objectives, RL-based fine-tuning enables direct optimization of reward functions~\citep{lamba2025alignment,wagenmaker2025steering,zhao2022alphaholdem,gan2024reflective,schulman2017proximal,sampa2026reinforcement}.

Most existing approaches formulate diffusion sampling as a sequential decision-making process, where each denoising step corresponds to an action, and a preference model or evaluator assigns a reward to the final generated sample. To address the extreme reward sparsity inherent in this setup, these methods typically propagate the final-step reward backward to all denoising steps and apply policy gradient updates across the entire trajectory. This strategy has been shown to improve alignment performance in practice.

However, such approaches implicitly treat all denoising steps as equally relevant for reward optimization, overlooking the intrinsic stage-wise structure of the diffusion generation process. As discussed in recent studies~\citep{choi2022perception,li2023autodiffusion,yi2024towards,xie2025dymo}, this simplification can lead to optimization inefficiencies, reward hacking, and degradation of sample diversity, motivating the need for more structured and dynamics-aware reinforcement learning formulations.

\subsection{Reward Hacking}
\label{reward_hacking_main}
Reward hacking~\citep{eisenstein2023helping,skalse2022defining,pan2024feedback,laidlaw2024correlated,hadfield2017inverse} is a well-known failure mode in reinforcement learning, where an agent maximizes the reward function through unintended shortcuts rather than achieving the intended task objective. This phenomenon typically arises when the reward function is misaligned with the true optimization goal, leading the policy to exploit spurious correlations rather than learn meaningful behaviors. In Fig.~\ref{fig:hack_impre}, we show the typical hacking impressions on a simple animal prompt set after reward optimization.

In the context of diffusion model fine-tuning with reinforcement learning, reward hacking becomes particularly pronounced due to the sparse and delayed nature of preference rewards~\citep{fan2024reinforcement}. Most existing approaches compute a single reward only at the final generation step and propagate it backward to all denoising steps. This reward-backpropagation strategy implicitly assumes that all intermediate denoising states are equally informative and can be guided by a single reward signal. However, such an assumption ignores the intrinsic stage-wise dynamics of diffusion generation.

As a result, repeatedly optimizing an identical reward across consecutive denoising steps can encourage the model to discover shortcuts that improve the accumulated reward over time without genuinely improving generation quality. These shortcuts often manifest as mode collapse, repetitive structures, or over-amplified local patterns, leading to reward hacking and degradation of generative diversity. Reward hacking in diffusion model RL fine-tuning is not merely caused by reward sparsity, but fundamentally rooted in the temporal homogenization of rewards induced by indiscriminate reward backpropagation~\citep{zhai2025mira,clark2023directly,rafailov2024scaling,bai2025dragon}.

\section{Method}
\label{sec:method}
% 首先，在Sec.~\ref{per}中给出本文所使用的扩散模型强化学习框架下所包含的数学定义。随后，在Sec.~\ref{switching}中我们给出了自适应生成阶段感知机制的构建。最后，在Sec.~\ref{Optimization}中我们给出了生成过程中各阶段所需的优化目标。

In Sec.~\ref{per}, we provide the mathematical definitions of the diffusion models used in this paper within the RL framework. Subsequently, Sec.~\ref{switching} presents the construction of the adaptive generation stage perception mechanism. Finally, Sec.~\ref{Optimization} details the optimization objectives required for each stage during the generation process.

\subsection{Reinforcement Learning  for Diffusion Models}
\label{per}
\paragraph{Diffusion Models.}
A diffusion model defines a data distribution by reversing a predefined noising process.  
Starting from a clean sample $\mathbf{x}_0$, the forward 
diffusion process 
$q(\mathbf{x}_t \mid \mathbf{x}_{t-1})$ 
incrementally injects Gaussian noise, yielding a latent trajectory
$\mathbf{x}_0 \!\rightarrow\! \cdots \!\rightarrow\! \mathbf{x}_T$, 
pure noise $\mathbf{x}_T$ is distributed as 
$p(\mathbf{x}_T)=\mathcal{N}(0,I)$.  
The generative model is trained to learn the corresponding reverse-time transitions: $
p_\theta(\mathbf{x}_{0:T})
= p(\mathbf{x}_T)\prod_{t=1}^T 
p_\theta(\mathbf{x}_{t-1}\mid \mathbf{x}_t),
$
The parameterized reverse conditionals are learned to approximate the exact reverse posterior
$q(\mathbf{x}_{t-1}\mid \mathbf{x}_t,\mathbf{x}_0)$.  
Sampling begins from noise $\mathbf{x}_T$ and successively composes the learned reverse transitions to recover a data sample.
Let 
$
q\left(\mathbf{x}_t \mid \mathbf{x}_{t-1}\right)
:= \mathcal{N}\!\left(\mathbf{x}_t; \sqrt{1 - \beta_t}\,\mathbf{x}_{t-1},\, \beta_t \mathbf{I}\right),
$
where$\alpha_t := 1 - \beta_t,\qquad \bar{\alpha}_t := \prod_{k=1}^t \alpha_k,\quad (\bar{\alpha}_0 := 1)$.
Let $\epsilon_\theta(\mathbf{x}_t, t)$ denote the learned noise-prediction function.  
The induced estimate of the clean sample $\hat{\mathbf{x}}_0$ can be obtained as：
\begin{equation}
\label{def: pseudo x0}
\hat{\mathbf{x}}_0(\mathbf{x}_t)
= \frac{1}{\sqrt{\bar{\alpha}_t}}
\left(
    \mathbf{x}_t
    - \sqrt{1 - \bar{\alpha}_t}\,
      \epsilon_\theta(\mathbf{x}_t)
\right).
\end{equation}

\paragraph{Text-to-Image Tasks.}
For text-conditioned generation, a conditioning representation
$\mathbf{z}$, typically extracted from a natural-language prompt, is 
introduced to guide the reverse diffusion process.
The model therefore parameterizes
$
p_\theta(\mathbf{x}_{0:T}\mid \mathbf{z})
= p(\mathbf{x}_T)\prod_{t=1}^T 
p_\theta(\mathbf{x}_{t-1}\mid \mathbf{x}_t,\mathbf{z}),
$
so that each denoising step is guided by $\mathbf{z}$.  
This conditioning mechanism steers the reverse trajectory toward samples 
that are semantically consistent with the input text.

\paragraph{MDP Formulation.}

% 扩散模型的去噪轨迹可表示为$\tau_{\text{Diffusion}} = (\mathbf{x}_T, \mathbf{x}_{T-1}, \ldots, \mathbf{x}_0), \quad \tau_{\text{Diffusion}} \sim p_\theta(\mathbf{x}_{0:T})$,
% 其中 $x_T$ 为纯噪声起点，$x_0$ 为最终输出样本。我们对去噪过程进行Markov Decision Process（MDP）建模。其中 $p_\theta(\mathbf{x}_{0:T}\mid\mathbf{z})$ 定义为开展文生图任务的扩散模型；定义prompt为$\mathbf{z}$，其服从分布 $p(\mathbf{z})$。MDP的轨迹表示为$
% \tau_{\mathrm{MDP}} = (\mathbf{s}_0, \mathbf{a}_0, r_0, \mathbf{s}_1, \mathbf{a}_1, r_1, \ldots, \mathbf{s}_T, r_T), 
% \quad \tau_{\mathrm{MDP}} \sim p_\theta(\mathbf{s}_{0:T}, \mathbf{a}_{0:T-1})$.
% We formulate the denoising process as a Markov Decision Process (MDP)~\cite{puterman1990markov,gan2024reflective}.

We formulate the denoising process as a Markov Decision Process (MDP). 
% The prompt $\mathbf{z}$ following the distribution $p(\mathbf{z})$.
% The MDP trajectory is defined as $
% \tau_{\mathrm{MDP}} = (\mathbf{s}_0, \mathbf{a}_0, r_0, \mathbf{s}_1, \mathbf{a}_1, r_1, \ldots, \mathbf{s}_T, r_T)$.
In the MDP formulation, \( \mathbf{s}_t \) represents all image states (latents) during the diffusion model's denoising process. Each state \( \mathbf{s}_t \) corresponds to the intermediate state \( \mathbf{x}_{T-t} \) in the denoising sequence. The action \( \mathbf{a}_t \) includes both the noise prediction result and the newly sampled noise at the timestep $t$, i.e., \( \mathbf{a}_t = \mathbf{x}_{T-t-1} \). It determines the transition from the current state \( \mathbf{x}_{T-t} \) to the next state \( \mathbf{x}_{T-t-1} \). $\pi_\theta$ denotes the parameterized policy. $P_0$ is the initial state distribution. $\delta_{(\mathbf{z}, \mathbf{x_{T-t-1}})}$ represents the Dirac distribution centered at $(\mathbf{z}, \mathbf{x_{T-t-1}})$.
\( P(\mathbf{s}_{t+1} \mid \mathbf{s}_t, \mathbf{a}_t) \) denotes the \textit{state transition distribution}, which defines the conditional probability distribution of the next state \( \mathbf{s}_{t+1} \) given the current state \( \mathbf{s}_t \) and the action \( \mathbf{a}_t \). The detailed definitions of  the above components in MDP are as follows:
$\mathbf{s}_t = \left(\mathbf{z}, \mathbf{x}_{T-t}\right)$, $\mathbf{a}_t = \mathbf{x}_{T-t-1}$, $\pi_\theta\left(\mathbf{a}_t \mid \mathbf{s}_t\right) = p_\theta\left(\mathbf{x}_{T-t-1} \mid \mathbf{x}_{T-t}, \mathbf{z}\right)$, $P_0\left(\mathbf{s}_0\right) = \left(p(\mathbf{z}), \mathcal{N}(\mathbf{0}, \mathbf{I})\right)$, $P\left(\mathbf{s}_{t+1} \mid \mathbf{s}_t, \mathbf{a}_t\right) = \delta_{(\mathbf{z}, \mathbf{x_{T-t-1}})}$. The preference rewards of existing RL methods is defined as:
\begin{equation}
\label{fake_reward}
        R\left(\mathbf{s}_t, \mathbf{a}_t\right) \triangleq 
        \begin{cases}
            r(\mathbf{s}_{t+1}) = r\left(\mathbf{x}_0, \mathbf{z}\right) & \text{if } t=T-1, \\
            0 & \text{otherwise}.
        \end{cases}.
\end{equation}
% 在去噪过程中，除最后一步外的所有中间时间步的奖励均为零；仅当生成过程完成即去噪到达最后一个时间步 \( t = T - 1 \) 时，奖励模型才根据最终生成图像的质量给出精确的偏好评估得分。
During the denoising, the rewards for all intermediate timesteps are zero. The reward model provides an accurate preference score based on the quality of the generated image only when the generation process is complete, i.e., when denoising reaches the final timestep \( t = T - 1 \). During RL-based diffusion model fine-tuning, our objective is to maximize the expected reward of generated images under the prompt distribution $p(\mathbf{z})$ as follow: $
\max_{\theta} \ \mathbb{E}_{p(\mathbf{z})} \mathbb{E}_{p_\theta(\mathbf{x}_0 \mid \mathbf{z})}[r(\mathbf{x}_0, \mathbf{z})]$. Hereafter, \textbf{\textit{the preference reward (final reward)}} is denoted as $r\left(\mathbf{x}_0, \mathbf{z}\right)$, distinct from \textbf{\textit{the stage rewards}}.

\subsection{Justification of Stage-wise Training Priorities}

\begin{thm}\label{thm1}
	If the diffusion model is perfectly trained and the reverse generation follows the score-based SDE,  $\hat{\mathbf{x}}^t_0$ denotes the posterior distribution of $\mathbf{x}_0$ at time $t$, then $\operatorname{Var}(\hat{\mathbf{x}}^t_0 \mid \mathbf{x}_t)$ is a decreasing function during the generation process.
\end{thm}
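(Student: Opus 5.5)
The plan is to read ``decreasing during the generation process'' as a statement about the posterior spread of $\mathbf{x}_0$ as the reverse time index $t$ runs from $T$ down to $0$. I would prove it in the averaged sense, $\mathbb{E}\,\operatorname{tr}\operatorname{Var}(\mathbf{x}_0\mid\mathbf{x}_t)$, and additionally pointwise in a Gaussian reference case.

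\emph{Step 1 (reduce to the forward process).} ``Perfectly trained'' means $\epsilon_\theta$ equals the true score up to scaling, $\nabla\log p_t(\mathbf{x}_t)=-\epsilon_\theta(\mathbf{x}_t)/\sqrt{1-\bar\alpha_t}$. By the Anderson reverse-time result, the reverse SDE driven by this score then has the same path law as the forward noising process. So along generation, the joint law of $(\mathbf{x}_0,\mathbf{x}_s,\mathbf{x}_t)$ coincides with that of the forward chain. Hence $\hat{\mathbf{x}}_0^t$ is exactly the forward posterior $q(\mathbf{x}_0\mid\mathbf{x}_t)$, whose mean is the estimate in Eq.~\eqref{def: pseudo x0} by Tweedie.

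\emph{Step 2 (Markov property and total variance).} Fix $s<t$, so $\mathbf{x}_s$ is visited later in generation than $\mathbf{x}_t$. The forward chain $\mathbf{x}_0\to\mathbf{x}_s\to\mathbf{x}_t$ is Markov, so $\mathbf{x}_0 \perp \mathbf{x}_t \mid \mathbf{x}_s$. This gives $\operatorname{Var}(\mathbf{x}_0\mid\mathbf{x}_s,\mathbf{x}_t)=\operatorname{Var}(\mathbf{x}_0\mid\mathbf{x}_s)$. Applying the law of total variance conditionally on $\mathbf{x}_t$ then yields
\begin{equation*}
\operatorname{Var}(\mathbf{x}_0\mid\mathbf{x}_t)=\mathbb{E}\big[\operatorname{Var}(\mathbf{x}_0\mid\mathbf{x}_s)\,\big|\,\mathbf{x}_t\big]+\operatorname{Var}\big(\mathbb{E}[\mathbf{x}_0\mid\mathbf{x}_s]\,\big|\,\mathbf{x}_t\big)\succeq \mathbb{E}\big[\operatorname{Var}(\mathbf{x}_0\mid\mathbf{x}_s)\,\big|\,\mathbf{x}_t\big].
\end{equation*}
Taking traces and outer expectations shows that $t\mapsto\mathbb{E}\operatorname{tr}\operatorname{Var}(\mathbf{x}_0\mid\mathbf{x}_t)$ is nondecreasing in $t$. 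Equivalently, it is nonincreasing as generation proceeds from $t=T$ to $t=0$. The inequality is strict unless $\mathbb{E}[\mathbf{x}_0\mid\mathbf{x}_s]$ is $\mathbf{x}_t$-measurable, which fails whenever $\beta$'s are positive and the data are nondegenerate. Note also that the conditional expectation $\mathbb{E}[\,\cdot\mid\mathbf{x}_t]$ in the display is precisely the average over the generation step $\mathbf{x}_t\to\mathbf{x}_s$, by Step 1. So the inequality can also be read as: \emph{along a generation path, the posterior variance is a supermartingale.}

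\emph{Step 3 (explicit illustration).} For $\mathbf{x}_0\sim\mathcal{N}(\mu,\sigma_0^2 I)$ the posterior is Gaussian with
\begin{equation*}
\operatorname{Var}(\mathbf{x}_0\mid\mathbf{x}_t)=\Big(\sigma_0^{-2}+\tfrac{\bar\alpha_t}{1-\bar\alpha_t}\Big)^{-1}I.
\end{equation*}
This is deterministic and strictly decreasing in $\mathrm{SNR}_t=\bar\alpha_t/(1-\bar\alpha_t)$. The SNR increases as $t\downarrow 0$, since $\bar\alpha_t=\prod_{k\le t}\alpha_k$ and $\bar\alpha_0=1$. More generally, Tweedie's second-order formula gives
\begin{equation*}
\operatorname{Var}(\mathbf{x}_0\mid\mathbf{x}_t)=\frac{1-\bar\alpha_t}{\bar\alpha_t}\big(I+(1-\bar\alpha_t)\nabla^2\log p_t(\mathbf{x}_t)\big).
\end{equation*}
This links the statement to the SNR used for stage detection.

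\emph{Main obstacle.} The hard point is that \emph{pointwise} monotonicity along a single sampled path is false for general multimodal data. A late observation can land between modes and temporarily increase the posterior spread. I therefore expect the statement to be provable only in expectation, in the supermartingale form of Step 2, or pointwise under the Gaussian/log-concave assumption of Step 3. I would state this interpretation explicitly before the argument. In the log-concave case I would try to control the Hessian term in the Tweedie formula to recover pointwise monotonicity.
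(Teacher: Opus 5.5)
Your proposal is correct. Your Steps 1 and 3 are precisely the paper's proof. The paper uses Anderson's reverse-time result to pass to the forward process, then \emph{assumes} a Gaussian prior $p(\mathbf{x}_0)=\mathcal{N}(\mathbf{0},\sigma_0^2\mathbf{I})$. It writes the posterior variance as $f(\bar\alpha_t)$ with $f(u)=(1-u)/\bigl(u+(1-u)/\sigma_0^2\bigr)$ and checks $f'(u)=-\bigl(u+(1-u)/\sigma_0^2\bigr)^{-2}<0$, while $\bar\alpha_t$ increases along generation. That Gaussian hypothesis is absent from the theorem statement. It is what makes the posterior variance independent of $\mathbf{x}_t$, so the paper gets a deterministic, strict, pathwise monotonicity at no cost.

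Your Step 2 is a genuinely different and more general route. The Markov property plus the law of total variance (the ``degrading the observation cannot lower the MMSE'' argument) needs no assumption on the data law. It only yields monotonicity of $\mathbb{E}\operatorname{tr}\operatorname{Var}(\mathbf{x}_0\mid\mathbf{x}_t)$, or the supermartingale property along the reverse filtration. As you note, pathwise monotonicity fails for multimodal data, so this is the strongest general form. Your remark therefore correctly shows that the literal statement is established, by the paper and by you, only under the extra Gaussian assumption.

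One correction to your closing plan: log-concavity alone will not restore the pathwise claim. In one dimension, take the Laplace prior $\frac{1}{2}e^{-|x|}$, the rescaled observation $y=\mathbf{x}_t/\sqrt{\bar\alpha_t}$, and the noise level $\tau_t=(1-\bar\alpha_t)/\bar\alpha_t$. As $y\to\infty$ the prior merely tilts the Gaussian likelihood linearly, so the posterior variance tends to $\tau$. At $y=0$, by contrast, it is below the prior variance $2$ for every $\tau$. Now consider a path with $y_t\approx 0$ at a time with $\tau_t>3$, and $y_s$ large at a later time with $\tau_s=3$. Its posterior variance rises, and such paths have positive probability under the reverse kernel. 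Beyond the Gaussian case you would need two-sided bounds $m\mathbf{I}\preceq-\nabla^2\log p\preceq M\mathbf{I}$ on the data log-density. Via Brascamp--Lieb and Cram\'er--Rao these give pathwise monotonicity only between times with $1/\tau_s-1/\tau_t\ge M-m$.
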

The proofs of Theorem \ref{thm1} is shown in Appendix.
\noindent According to Theorem~\ref{thm1}, the posterior variance $\operatorname{Var}(\hat{\mathbf{x}}^t_0 \mid \mathbf{x}_t)$ decreases monotonically during the generation process. A decreasing variance implies that the model's uncertainty about $\mathbf{x}_0$ gradually diminishes over the generation process. This aligns with intuition: at the early stage of generation, the sample is almost pure noise, making it difficult for the model to infer the original data, so the uncertainty is high; as the time step decreases, more signal is recovered, and the model's generation becomes increasingly certain.

\noindent \textbf{Linking Theory and RL:} Reduced uncertainty induces progressively stabilized state transitions. High uncertainty yields chaotic latents, weak action–reward correlation, and high training variance (intuitively, the image is not yet formed, so finetuning the model in this phase can lead to inconsistent final results). In contrast, overly low uncertainty implies near-converged state transitions, saturated gains, and susceptibility to overfitting (intuitively, the image is mostly fixed, and further adjustments yield neglectable improvement). Based on this theory and Fig.~\ref{fig:motivation}, SGPO restricts the task (preference) reward to an intermediate-uncertainty regime (Stage II) with clear action-task (preference) reward attribution and evident gains. In the high-uncertainty region (Stage I), action-task (preference) reward attribution is extremely low, so this region is not suitable for optimizing the preference reward; instead, it should encourage quick denoising. Finally, when the uncertainty enters the extremely low region (Stage III), action-preference reward attribution basically converges, and there is no further space for optimizing the preference; forcing optimization at this stage easily leads to overfitting. Therefore, in this stage, we encourage stable convergence to the true distribution.
\subsection{Adaptive Switching Mechanism}
\label{switching}
In this subsection, we detect stage-wise generative dynamics of diffusion models via real-time semantic structure changes.
\paragraph{Semantic Structure of Generation.}
% 首先对于潜变量 $\mathbf{x}_t$ 使用Eq.~\ref{def: pseudo x0}预测对应的干净样本估计 $\hat{\mathbf{x}}_0(\mathbf{x}_t, t)$。

% 为了刻画生成过程中语义结构的演化特性，我们引入一个阶段敏感的语义信号 $\mathcal{E}_t$，其被定义为预测干净样本 $\hat{\mathbf{x}}_0(\mathbf{x}_t, t)$ 与条件输入（如文本提示）之间的语义对齐得分：
% Firstly, for the latent $\mathbf{x}_t$, we use Eq.~\ref{def: pseudo x0} to estimate the corresponding clean sample. 
%  $\hat{\mathbf{x}}_0(\mathbf{x}_t, t)$ 
 
To characterize the evolution of semantic structure during the generation process, we introduce a stage-sensitive semantic signal $\mathcal{E}(t)$, which is defined as the semantic alignment score between the predicted clean sample and its corresponding prompt:
\begin{equation}
    \mathcal{E}(t) = \phi\!\left(\hat{\mathbf{x}}_0(\mathbf{x}_t),\,\mathbf{z} \right),
\end{equation}
where $\phi(\cdot,\cdot)$ denotes a semantic-structural similarity 
% (text-image alignment) 
scorer (e.g., the CLIP model~\citep{radford2021learning}),  $\hat{\mathbf{x}}_0(\mathbf{x}_t)$ denotes the clean image predicted from latent $\mathbf{x}_t$ by Eq.~\ref{def: pseudo x0}, and $\mathbf{z}$ is the input prompt. Next, we define the semantic evolution rate between adjacent deoising steps as:
\begin{align}
\label{double_clip}
\Delta \mathcal{E}(t)
&= \mathcal{E}(t) - \mathcal{E}(t-1) \nonumber \\
&= \phi\!\left(\hat{\mathbf{x}}_0(\mathbf{x}_t),\,\mathbf{z} \right)
 - \phi\!\left(\hat{\mathbf{x}}_0(\mathbf{x}_{t-1}),\,\mathbf{z} \right).
\end{align}
We further monitor the signal-to-noise ratio (SNR) to capture noise retention during generation, and define $\gamma(t)=\mathrm{SNR}(t)$.
\begin{figure*}[t]
    \centering
    \includegraphics[width=1\linewidth]{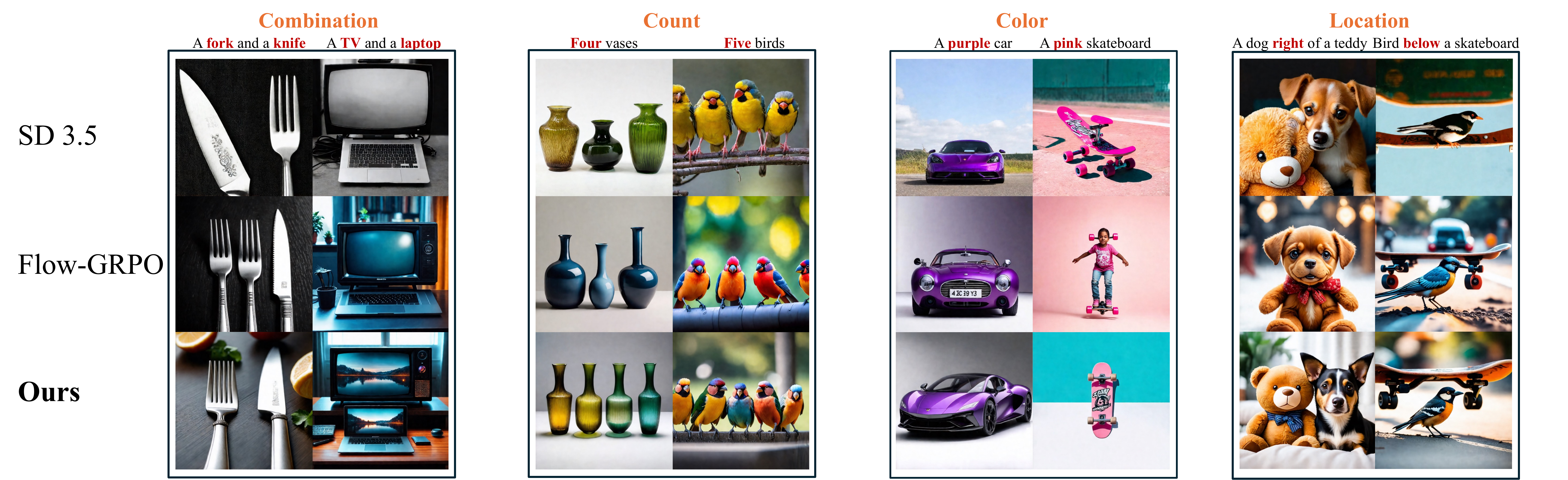}
    \caption{\textbf{Controlled generation}.
    Using prompts with a specific combination, count, color, and location, \textit{SGPO} demonstrates stronger instruction-following ability than baselines.}
    \label{ijcai_color}
\end{figure*}

\paragraph{Adaptive Stage-wise Decomposition.}
As shown in Fig.~\ref{fig:motivation}, when the signal-to-noise ratio stabilizes, indicating a transition from chaotic states to semantic structure formation. This point satisfies $\dfrac{d^2 \gamma(t)}{dt^2} \to 0$ (defined as $t_c^{(1)}$), and can be used to separate the early and middle denoising stages. When the semantic indicator $\Delta \mathcal{E}(t)$ stabilizes, indicating entry into the late stage. This point satisfies $\dfrac{d^2 \Delta \mathcal{E}(t)}{dt^2} \to 0$ (defined as $t_c^{(2)}$), and can be used to separate the middle and late denoising stages. 
$\dfrac{d^2 \gamma(t)}{dt^2} \to 0$ and $\dfrac{d^2 \Delta \mathcal{E}(t)}{dt^2} \to 0$ respectively indicate that the second-order derivatives tend to zero.
We thus define the stage indicator $\pi(t)$:
\begin{equation}
\pi(t)=
\begin{cases}
S_{\text{early}}, 
& 0 \le t < t_c^{(1)}, \\[6pt]
S_{\text{mid}},   
& t_c^{(1)} \le t < t_c^{(2)}, \\[6pt]
S_{\text{late}},  
& t \ge t_c^{(2)} .
\end{cases}.
\end{equation}
This stage indicator provides a basis for dynamically shifting the RL optimization focus across different generation stages.
% % 具体地，我们采用如下形式定义阈值：
% Specifically, we adopt the following expression to define the threshold:
% \begin{equation}
%     \tau_1 = \mu_{\Delta s} + \lambda_1 \sigma_{\Delta s}, 
%     \qquad
%     \tau_2 = \mu_{\Delta s} - \lambda_2 \sigma_{\Delta s},
% \end{equation}
% % 其中 $\mu_{\Delta s}$ 和 $\sigma_{\Delta s}$ 分别表示 $\Delta s_t$ 在去噪轨迹上的运行均值与标准差，$\lambda_1$ 和 $\lambda_2$ 为控制阶段划分灵敏度的超参数。该阶段指示函数为后续在不同生成阶段中动态切换强化学习优化关注点提供了明确而稳定的依据。
% where $\mu_{\Delta s}$ and $\sigma_{\Delta s}$ denote the mean and standard deviation of $\Delta s_t$ over the denoising trajectory, respectively. $\lambda_1$ and $\lambda_2$ are hyperparameters that control the sensitivity of stage division. 

\subsection{Stage-Aware Optimization}
\label{Optimization}
% 基于对 $\mathcal{E}_t$ 的持续感知，我们使强化学习训练过程能够与扩散模型生成过程中语义演化的不同阶段保持一致，并在生成进展的不同区间自适应采用与其生成动力学相匹配的奖励形式。该分阶段奖励设计的核心思想在于：避免在语义尚未成型时即评估偏好奖励质量同时避免在图像已经趋于稳定的阶段对偏好奖励过渡优化，从而实现提高训练精度的同时缓解奖励劫持问题。
% Based on the continuous perception of $\mathcal{E}_t$, we align the RL training process with the distinct stages of semantic evolution during the diffusion model's generation, and adaptively employ reward functions that match the generation dynamics at different stages. The core idea behind this stage-based reward design is to avoid evaluating the quality of preference rewards when semantics have not yet formed, while also preventing excessive optimization of preference rewards once the image has already stabilized. This approach improves training precision while mitigating the reward hacking problem.
By continuously tracking $\Delta \mathcal{E}(t)$ and $\gamma(t)$, we align RL training with the stage-wise semantic evolution of diffusion generation and adapt reward functions accordingly.
This strategy prevents premature reward evaluation in unformed semantics and excessive optimization after stabilization. Finally, it improves optimization precision while reducing reward hacking.

\paragraph{Stage I: Chaotic Stage ($S_{\text{early}}$).}
% 在生成早期，图像处于强噪声主导的混沌状态，缺乏稳定的语义结构，且与末步图像质量之间的关联性极弱。在此阶段，末步偏好奖励与单步去噪动作之间几乎不存在可靠因果关系，直接基于末步奖励进行优化容易将随机噪声扰动误判为有意义的优化信号，从而引入不稳定甚至有害的梯度更新。因此，该阶段的合理优化目标并非直接进行偏好对齐，而是尽快推动生成状态脱离高噪声区域。故本阶段的奖励函数定义为：
% During the early stage of generation, the image is dominated by strong noise and remains chaotic, lacking stable semantic structure and exhibiting only a weak correlation with the final image quality. At this stage, there is almost no reliable causal relationship between the final preference reward and the individual denoising actions. Direct optimization based on the final-step reward can misinterpret random noise as meaningful optimization signals, leading to unstable or even harmful gradient updates. Therefore, the reasonable optimization objective at this stage is to expedite the transition of the generation state away from high-noise regions. Hence, the reward function for this stage is defined as follows:
In the early generation, intense noise dominates the latent state, leading to weak correlation between denoising behaviors and $r\left(\mathbf{x}_0, \mathbf{z}\right)$.
 preference rewards optimization at this stage yields unreliable gradients by attributing noise to meaningful improvement.
Thus, the optimization objective focuses on rapidly exiting high-noise regions, with \textit{the Stage I Rewards Function} defined as:
\begin{equation}
    r_t^{\text{(I)}} \;=\; \lambda_t\bigl\| \mathbf{x}_t - \mathbf{x}_T \bigr\|^2,
\end{equation}
% 其中 $x_T \sim \mathcal{N}(0, I)$ 表示初始噪声状态，$\mathrm{SNR}(t)$ 为与当前时间步 $t$ 对应的信噪比，用于自适应调节不同噪声水平下的优化强度。我们通过最大化当前状态与初始噪声之间的距离鼓励更快远离混沌状态的策略，同时引入信噪比作为时间相关系数，使优化强度与当前噪声水平相匹配。
% where $x_T \sim \mathcal{N}(0, I)$ denotes the initial noise state, and $\mathrm{SNR}(t)$ represents the signal‑to‑noise ratio corresponding to the current timestep $t$, which is used to adaptively adjust the optimization strength under different noise levels. By maximizing the distance between the current state and the initial noise, we encourage the optimization strategy to drive the generation away from chaotic states more rapidly. The introduced signal‑to‑noise ratio serves as a time‑dependent coefficient to align the optimization intensity with the current noise level.
where $\mathbf{x}_T \sim \mathcal{N}(0, I)$ represents the initial noise, and $\lambda_t$ is obtained by scaling $\mathrm{SNR}(t)$ with a constant factor.
Maximizing the distance to the noise origin accelerates departure from chaotic states, while $\mathrm{SNR}(t)$ aligns optimization intensity with the current noise level.

\begin{table*}[t]
    \setlength{\tabcolsep}{5.7pt} % adjust column separation
    \centering
    \caption{\textbf{Reward Hacking Evaluations with Fixed Target Reward and Multiple Cross-Metrics (\textbf{12} in total).} All metrics are obtained with Aesthetic Score (AES) as the targeted reward. Bold indicates the best performance, underlining indicates the second best, and gray entries denote the backbone references that are excluded from comparison. \textit{At matched AES, our method consistently outperforms others in Fidelity, Diversity, and Richness, indicating that it is not overly hacked by targeted reward.}} \label{tab:main}
    \centering
\begin{tabular}{l|c|ccccccccccccc}
    \toprule

    \multirow{2}{*}{Method} & \multicolumn{4}{c}{\textbf{Preference}} & \multicolumn{3}{c}{\textbf{Fidelity}} & \multicolumn{3}{c}{\textbf{Diversity}} & \multicolumn{3}{c}{\textbf{Richness}} & \multirow{2}{*}{\#Top2} \\
    \cmidrule(lr){2-5}\cmidrule(lr){6-8}\cmidrule(lr){9-11}\cmidrule(lr){12-14}
        & \textcolor{gray}{AES}&PS$\uparrow$& IR$\uparrow$ & HPS$\uparrow$ 
        & FID$\downarrow$ & CLIP$\uparrow$ & iFS$\uparrow$ 
        & LPIPS$\uparrow$ & IS$\uparrow$ & TCE$\uparrow$ 
        & BRI$\downarrow$ & NIQE$\downarrow$ & SE$\uparrow$
        &  \\
    \midrule
 \textcolor{gray}{SDv15} & \textcolor{gray}{5.514} & \textcolor{gray}{21.29} & \textcolor{gray}{0.916} & \textcolor{gray}{0.280} & \textcolor{gray}{-} & \textcolor{gray}{0.243} & \textcolor{gray}{-} & \textcolor{gray}{0.646} & \textcolor{gray}{30.31} & \textcolor{gray}{40.23} & \textcolor{gray}{8.02} & \textcolor{gray}{4.131} & \textcolor{gray}{11.34} & -\\
DDPO & \textcolor{gray}{6.004} & 20.99 & 0.897 & 0.266 & 95.15 & 0.239 & 0.494 & 0.601 & 26.17 & 38.98 & 11.01 & 6.967 & 10.98 & 0\\
DPOK & \textcolor{gray}{5.959} & 21.12 & 0.793 & 0.271 & 63.31 & \underline{0.242} & 0.689 & 0.643 & \underline{28.18} & 39.51 & 9.07 & \underline{4.795} & 11.30 & 3 \\
D3PO & \textcolor{gray}{5.970} & 21.10 & 0.829 & {0.276} & 72.37 & 0.240 & 0.696 & 0.640 & 27.97 & \underline{39.95} & 8.69 & 5.003 & 11.37 & 1 \\
B2Diff & \textcolor{gray}{6.029} & 21.09 & \underline{0.965} & \textbf{0.279} & \underline{53.32} & 0.241 & \underline{0.777} & \underline{0.647} & 26.21 & 39.79 & 8.32 & 5.652 & 11.58 & 5 \\
DR & \textcolor{gray}{6.011} & \underline{21.41} & 0.723 & 0.273 & 74.25 & \textbf{0.244} & 0.316 & 0.548 & 27.89 & 39.37 & \underline{7.66} & 5.030 & \underline{11.72} & 4 \\
TDPO & \textcolor{gray}{6.072} & 21.31 & 0.676 & 0.261 & 101.4 & 0.232 & 0.217 & 0.517 & 25.95 & 37.16 & 8.08 & 6.050 & 11.52 & 0 \\
\midrule
\rowcolor{blue!10} \textbf{Ours} & \textcolor{gray}{5.967} & \textbf{21.56} & \textbf{0.998} & \underline{0.278} & \textbf{49.36} & \textbf{0.244} & \textbf{0.868} & \textbf{0.653} & \textbf{29.81} & \textbf{40.14} & \textbf{5.89} & \textbf{4.272} & \textbf{11.86} & \textcolor{red}{\textbf{12}} \\
 \bottomrule
\end{tabular}

    % \caption{Your caption here.}
    % \label{tab:your_label}
\end{table*}

\paragraph{Stage II: Stable Preference Stage ($S_{\text{mid}}$). }

When $\Delta \mathcal{E}(t)$ and $\gamma(t)$ stabilize, the generation process enters the middle stage. In this stage, the semantic representations emerge, and the correlation between $r\left(\mathbf{x}_0, \mathbf{z}\right)$ and the resulting behavior strengthens, making optimization of $r\left(\mathbf{x}_0, \mathbf{z}\right)$ meaningful.
As $\Delta \mathcal{E}(t)$ remains stable, the generation trajectory retains sufficient plasticity. To prevent premature convergence to a single generation mode during RL optimization, we therefore introduce an exploration term based on differences between adjacent latents, modulated by $\Delta \mathcal{E}(t)$. This enables diversity under directional constraints. \textit{The stage II Rewards Function} is defined as:
\begin{equation}
    r_t^{\text{(II)}} \;=\; r\left(\mathbf{x}_0, \mathbf{z}\right)+\; \Delta \mathcal{E}(t)\,\bigl\| \hat{\mathbf{x}}_0(\mathbf{x}_t) - \hat{\mathbf{x}}_0(\mathbf{x}_{t-1}) \bigr\|^2,
\end{equation}
% 其中 $r\left(\mathbf{x}_0, \mathbf{z}\right)$ 表示基于最终生成图像的得到的末步奖励（偏好奖励）。$\hat{\mathbf{x}}_0(\mathbf{x}_t, t)$与$\hat{\mathbf{x}}_0(\mathbf{x}_{t-1}, t-1)$ 表示在第 $t$个时间不和上一步$t-1$ 个去噪步对应的预测干净图像。我们使用欧几里得距离衡量第 $t$个时间不和上一步$t-1$ 时刻图像预测的生成分布上的差异鼓励他们之间的差异越大越好，以此刺激产生跟多样化的生成轨迹。\todor{同时为了使探索不会破坏文图对齐质量，我们使用语义变化率 $\mathcal{E}_t$对探索奖励进行两个调控：1.根据结构变动程度衡量探索力度; 2.由于$\mathcal{E}_t$可正可负，从上一步$t-1$到$t$之间采用的探索策略如果导致文图对齐度下降则根据Eq.~\ref{double_clip} $\mathcal{E}_t$为负探索奖励变为探索惩罚项，可见通过$\mathcal{E}_t$作为探索奖励的系数后，探索奖励在鼓励多样性生成的同时，始终朝着文图一致性高的探索，保证了其合理性。}
% \todor{LL: 两个调控写的不太好，组织下语言}where $r\left(\mathbf{x}_0, \mathbf{z}\right)$ denotes the final-step reward based on the generated image. $\hat{\mathbf{x}}_0(\mathbf{x}_t, t)$ and $\hat{\mathbf{x}}_0(\mathbf{x}_{t-1}, t-1)$ represent the predicted clean images at the current denoising step $t$ and the previous step 
% $t-1$, respectively. We use the Euclidean distance to measure the difference between the predicted generative distributions at step 
% $t$ and step $t-1$, encouraging greater divergence to stimulate more diverse generation trajectories. Meanwhile, to ensure that exploration does not compromise text-image alignment quality, we use the semantic change rate $\mathcal{E}_t$ to modulate the exploration reward in two ways. 
where $r(\mathbf{x}_0, \mathbf{z})$ is the preference rewards.
$\hat{\mathbf{x}}_0(\mathbf{x}_t)$ and $\hat{\mathbf{x}}_0(\mathbf{x}_{t-1})$ are the predicted clean images at steps $t$ and $t-1$, respectively, computed via Eq.~\ref{def: pseudo x0}.
We use their Euclidean distance to quantify differences between successive predicted generative distributions, encouraging controlled deviations to enhance generation diversity.
To ensure that exploration does not compromise text–image alignment, the exploration term is modulated by $\Delta \mathcal{E}(t)$ under two cases: $\Delta \mathcal{E}(t)>0$: The exploration term
    $\; \Delta \mathcal{E}(t)\,\bigl\| \hat{\mathbf{x}}_0(\mathbf{x}_t) - \hat{\mathbf{x}}_0(\mathbf{x}_{t-1}) \bigr\|^2$
    is positive, and the current exploration behavior is encouraged;
$\Delta \mathcal{E}(t)<0$: The exploration term
    $\; \Delta \mathcal{E}(t)\,\bigl\| \hat{\mathbf{x}}_0(\mathbf{x}_t) - \hat{\mathbf{x}}_0(\mathbf{x}_{t-1}) \bigr\|^2$
    is negative, and the current exploration behavior is suppressed.

% 通过引入$\Delta \mathcal{E}_t$作为探索调制项保证了探索始终沿着有有利于语义结构提升的方向进行生成轨迹多样性的探索。
Introducing $\Delta \mathcal{E}(t)$ as a modulation signal ensures that exploration remains directionally constrained toward semantic structure improvement. Thereby, this reward design promotes trajectory diversity while preserving semantic consistency.

\paragraph{Stage III: Convergence Stage ($S_{\text{late}}$).}
% 当相邻去噪步之间的语义变化率 $\Delta s_t$ 逐渐趋近于零时，当前去噪步被判定为生成后期阶段，当生成过程进入后期，图像的全局结构与语义关系已基本确定，语义变化率趋近于零表明生成轨迹逐渐稳定。在这一阶段，继续以提升末步奖励为目标进行强化优化往往收益有限，反而容易引发过度优化，导致生成结果偏离原始数据分布并加剧奖励劫持问题。

% 因此，该阶段的优化目标应从奖励最大化转向稳定收敛。我们通过引入与预训练模型预测结果之间的距离惩罚，并以 $\Delta s_t$ 作为权重，使得当语义变化趋于饱和时，模型被显式拉回到可信的生成分布邻域，从而抑制不必要的策略偏移并保证生成质量的稳定性。

% 即满足

% When the semantic change rate $\Delta s_t$ between adjacent denoising steps gradually approaches zero, the current denoising step is determined to be in the late generation stage. At this point, the global structure and semantic relationships of the image are largely established, which indicates that the generation trajectory is stabilizing. During this stage, continuing to optimize for higher final-step reward via RL typically yields diminishing returns and may instead lead to over‑optimization, causing the generated outputs to deviate from the original data distribution and exacerbating reward hacking.Therefore, the optimization objective in this stage should shift from reward maximization to stable convergence. We introduce a penalty term based on the distance from the pre‑trained model’s prediction, weighted by $\Delta s_t$. When semantic change saturates, the model is explicitly pulled back to the original generation distribution. This strategy suppresses unnecessary policy drift and maintains stable generation quality. 
When $\dfrac{d^2 \Delta \mathcal{E}(t)}{dt^2} \to 0$. Further reward optimization offers negligible benefits and carries the risks of over-optimization. We thus guide stable convergence. In this stage, we define \textit{the Stage III Rewards Function} as
\begin{equation}
    r_t^{\text{(III)}} \;=\; -\frac{1}{2}
\,\bigl\| \hat{\mathbf{x}}_0(\mathbf{x}_t) - x_0^{\mathrm{pre}} \bigr\|^2,
\end{equation}
% 其中 $\hat{x}_0^{\mathrm{pre}}$ 表示预训练扩散模型在相同条件下的预测干净图像。
where $x_0^{\mathrm{pre}}$ denotes the output of the pre‑trained diffusion model under the same $\mathbf{z}$ (prompts). 

\paragraph{Training Reward function.} Based on the above design, the overall reward function is formulated as:
% \begin{align*}
%     \mathcal{L}_{\text{adaptive}} =& - \frac{1}{N} \sum_{i,t}^{N,T} \left[ y_t[i] \log \psi(\mathbf{s}_t[i]) \right.\\
%     & \left. + (1 - y_t[i])\log(1 - \psi(\mathbf{s}_t[i])) \right].
% \end{align*}

% \begin{equation}
% \label{eq:overall_reward_cases}
% R_\text{total}=
% \begin{cases}
% r_t^{\text{(I)}} \;=\; \lambda\bigl\| \mathbf{x}_t - \mathbf{x}_T \bigr\|^2, & \pi(t)=S_{\text{early}},\\
%     r_t^{\text{(II)}} \;=\; r\left(\mathbf{x}_0, \mathbf{z}\right)+\; \Delta \mathcal{E}_t\,\bigl\| \hat{\mathbf{x}}_0(\mathbf{x}_t) - \hat{\mathbf{x}}_0(\mathbf{x}_{t-1}) \bigr\|^2, & \pi(t)=S_{\text{mid}},\\
%     r_t^{\text{(III)}} \;=\; \frac{1}{2}
% \,\bigl\| \hat{\mathbf{x}}_0(\mathbf{x}_t, t) - x_0^{\mathrm{pre}} \bigr\|^2, & \pi(t)=S_{\text{late}}.
% \end{cases}
% \end{equation}
\begin{equation}
\label{eq:overall_reward_cases}
R_{\text{total}}=
\begin{cases}
r_t^{\text{(I)}}, 
& 0 \le t < t_c^{(1)}, \\[3pt]
r_t^{\text{(II)}},   
& t_c^{(1)} \le t < t_c^{(2)}, \\[3pt]
r_t^{\text{(III)}},  
& t \ge t_c^{(2)} .
\end{cases}.
\end{equation}
Using $R_{\text{total}}$, SGPO drives early generation away from chaos, optimizes preference rewards with diversity exploration in the middle stage after semantic formation, and enforces stable convergence in the late stage. $R_{\text{total}}$ ensures that each generation step receives a distinct step-wise reward signal, thereby alleviating reward sparsity.
The use of differentiated rewards further prevents the model from exploiting reward shortcuts caused by repeatedly applying the same reward across steps.

\begin{figure*}[htbp]
    \centering
    \includegraphics[width=1\linewidth]{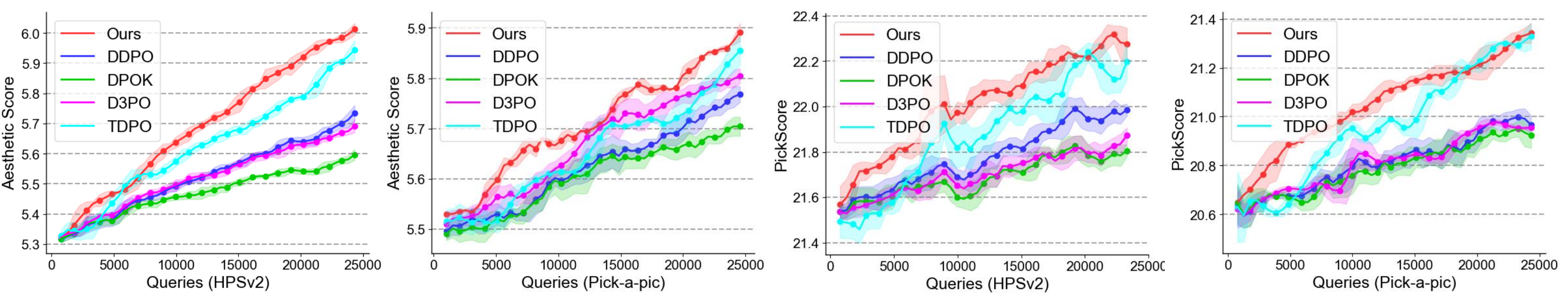}
    \caption{\textbf{Results Trained with Multiple Datasets and Target Rewards}, that are, HPSv2 and Pickapic prompts with Aesthetic Score and PickScore as the optimization objectives. \textit{Notably, TDPO leads to reward hacking manifested as text–image misalignment during reward improvement (Appendix Fig.~\ref{fig:hack_impre}), whereas our method avoids this issue.}}
    \label{fig:main}

\end{figure*}
\begin{figure}[t]
    \centering
    \includegraphics[width=1.0\linewidth]{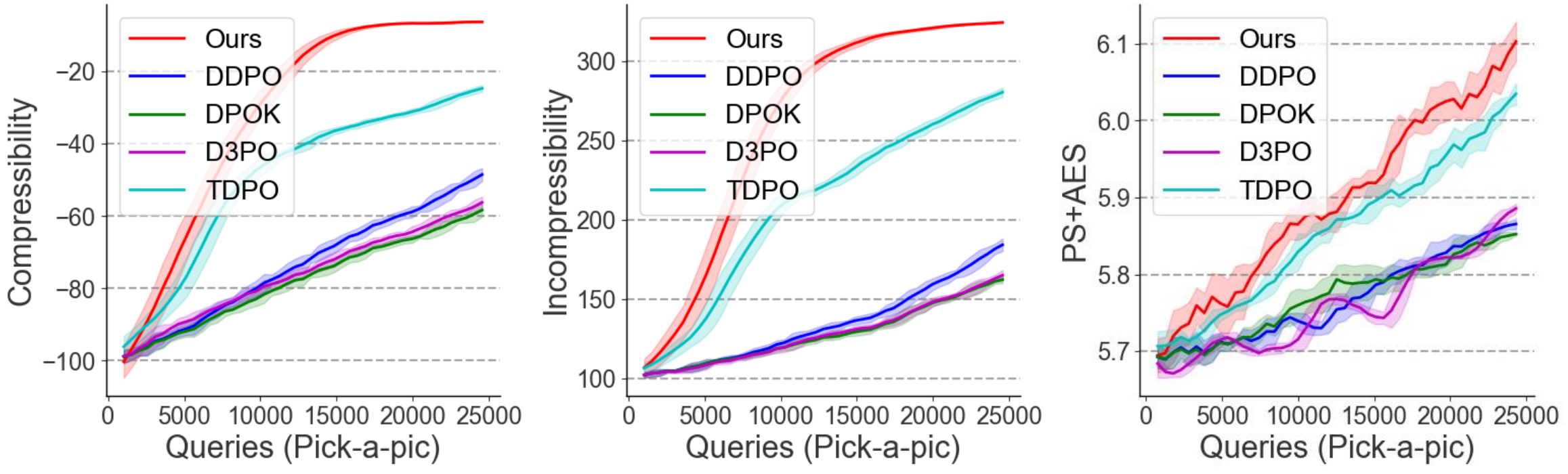}
    \caption{Optimization results on more objectives, including compressibility, incompressibility, and multi-objective (PS+AES).}
    \label{fig:more_obj}
    % 50k
\end{figure}
\begin{figure}[htbp]
    \centering
    \includegraphics[width=1.0\linewidth]{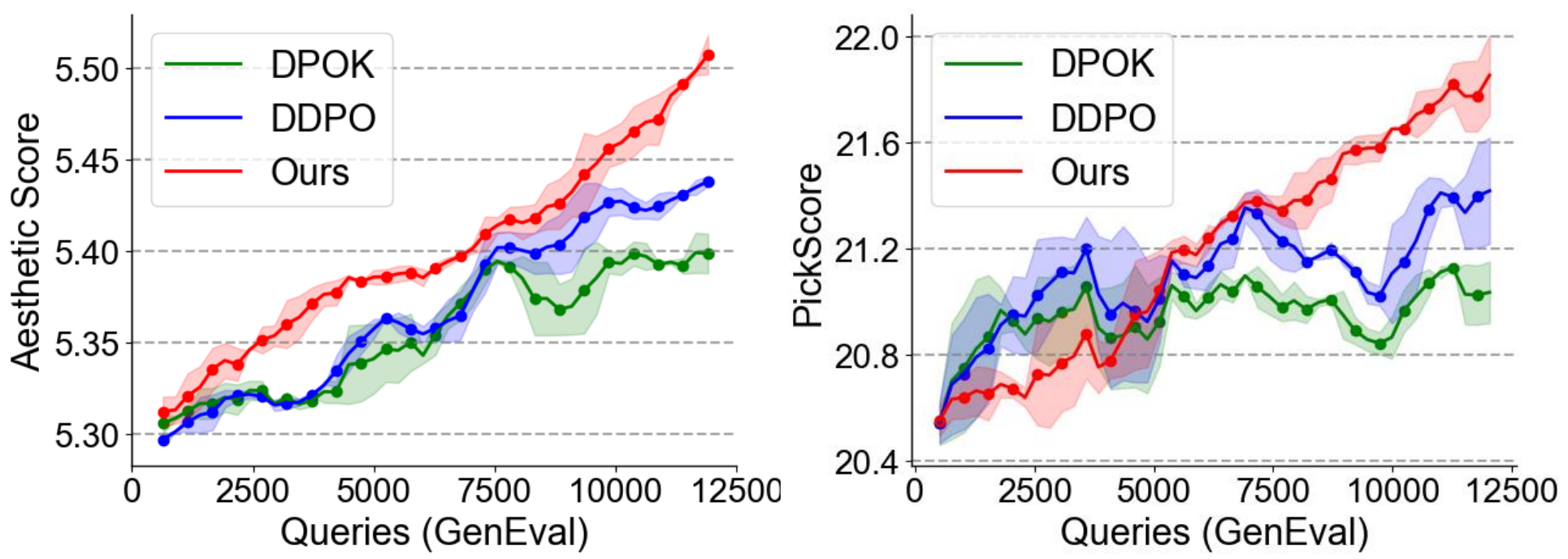}
    \caption{Optimization results on GenEval datasets.}
    \label{fig:geneval}
    % 50k
\end{figure}
\begin{figure}[t]
    \centering
    \includegraphics[width=1\linewidth]{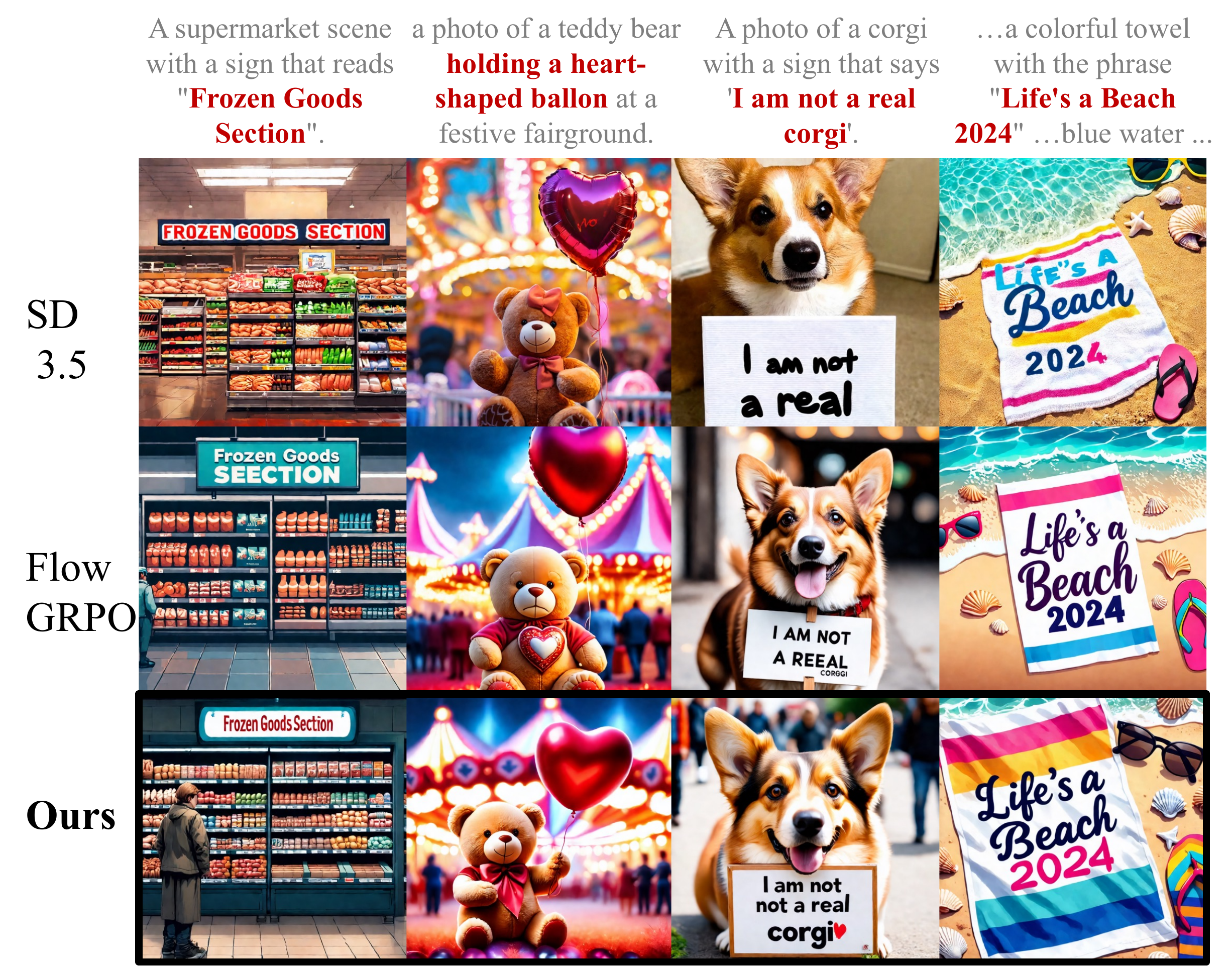}
    \caption{\textbf{Complex prompt alignment}.
    We evaluate alignment under complex prompts and OCR generation, where \textit{SGPO} significantly outperforms baseline methods.}
    \label{fig:alig}
        \vspace{-0.5cm}
\end{figure}
\begin{figure}[h]
        \includegraphics[width=1\linewidth]{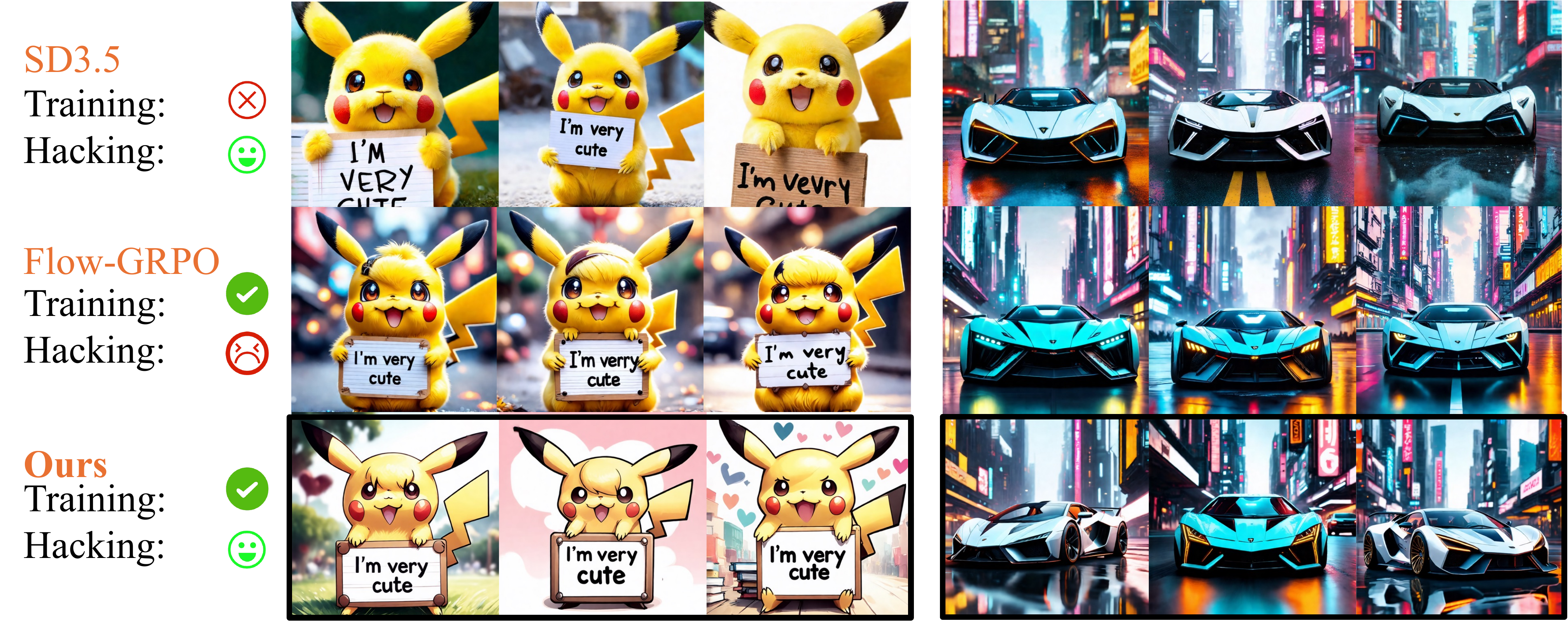}
        \caption{\textbf{Generation diversity comparison}. SGPO yields higher diversity in shape, orientation, and color than baselines under equal training epochs.
        % \textcolor{red}{3.5和数据集}
        }
        \label{fig:diverstiy}
        \vspace{-0.5cm}
\end{figure}

\begin{figure}[ht]
    \centering
\includegraphics[width=1\linewidth]{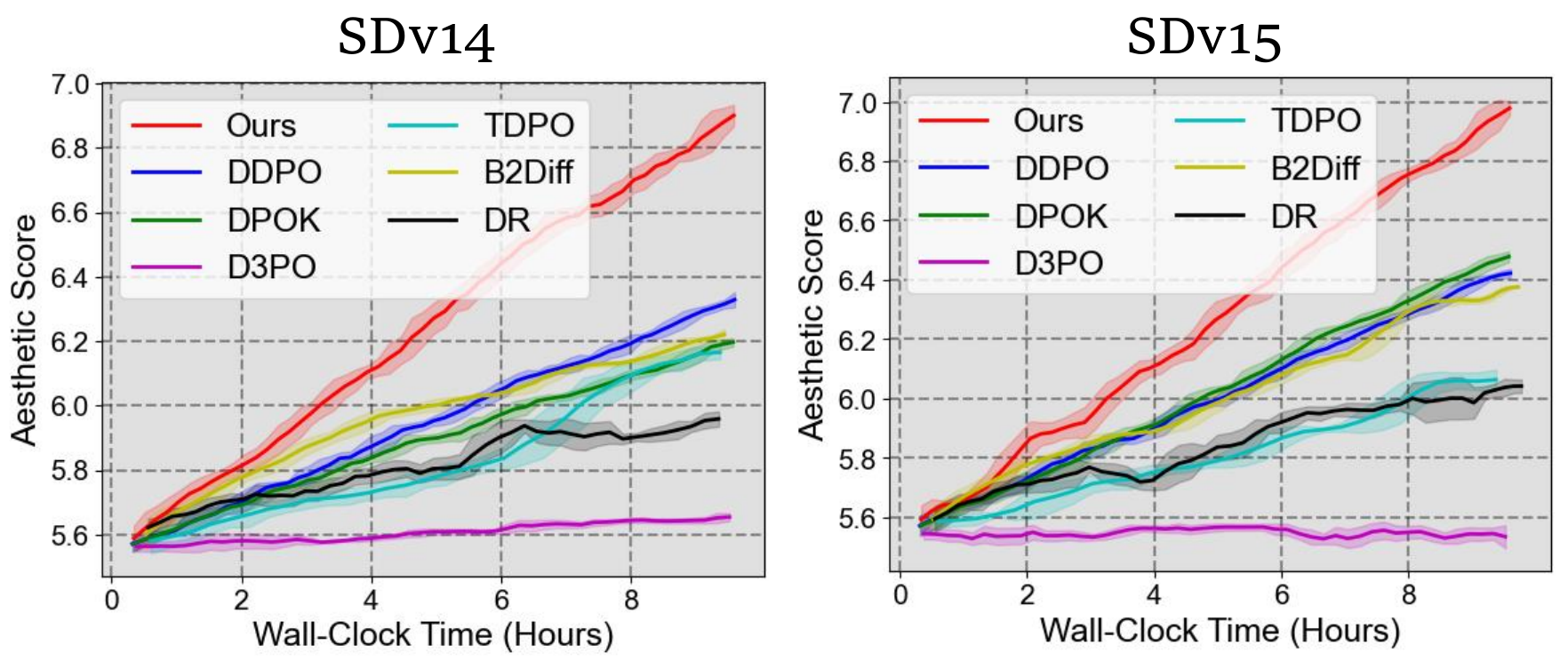}
    \caption{Optimization Efficiency with \textbf{wall-clock time} as the x-axis reference.}
    \label{fig:wallclock}
\end{figure}

\section{Experimental Evaluation}
\subsection{Implementation Details}

\noindent \textbf{Datasets.}
We utilize four popular datasets for the fine-tuning process, that are, \textit{HPSv2}~\citep{hpsv2}, \textit{Pick-a-Pic}~\citep{kirstain2023pick}, \textit{GenEval}~\citep{geneval}, and Simple Animals~\citep{black2023training}. Specifically, we focus on the HPSv2-photo subset and GenEval, which offer a more realistic and natural style. In addition, we select a subset of the Pick-a-Pic validation set comprising 500 prompts, characterized by abstract and surreal imagery.

\noindent \textbf{Rewards and Metrics.}
The training rewards in our approach cover a wide range, including the Aesthetic Score (AES)~\citep{aesthetic}, PickScore (PS)~\citep{kirstain2023pick}, JPEG compressibility, incompressibility, and multi-objective (AES+PS). Then, to quantify the issue of reward hacking, we further introduce metrics for evaluation dimensions:    \textit{Aesthetic Preference:} AES, PS, IR, and HPSv2~\citep{hpsv2}.
    \textit{Image Fidelity:} ClipScore~\citep{clip} (Clip), Fréchet Inception Distance (FID)~\citep{fid}, and improved F1 Score (iFS)~\citep{iPR}.
    \textit{Generative Diversity:} LPIPS~\citep{LPIPS}, TCE~\citep{TCE}, and Inception Score~\citep{inception} (IS).
    \textit{Compositional Richness:} NIQE~\citep{niqe}, BRISQUE (BRI)~\citep{brisque}, and Spectral Entropy (SE)~\citep{spectralE}.\\
\noindent \textbf{Baselines.}
We compare against both traditional \textit{SDE-based} and \textit{flow-matching-based} diffusion baselines.
Due to architectural differences, comparisons are conducted separately within each framework.
The SDE-based baselines include DDPO~\citep{black2023training}, DPOK~\citep{fan2024reinforcement}, D3PO~\citep{yang2024using}, TDPO~\citep{zhang2024confronting}, DenseReward~\citep{densereward} (DR), and B2DiffusionRL~\citep{b2diff} (B2Diff), while the \textit{flow-matching-based} baselines include the flow-matching versions of GRPO (Flow-GRPO), PPO (DDPO), SFT (D3PO), and DPO (Diffusion-DPO). Notably, our method can be integrated with any of these baselines. For simplicity, we refer to the combination of DDPO and our method as `Ours'. The shadows of the curves represent the standard deviations with three seeds. 

\noindent \textbf{Backbones.}
To evaluate robustness across architectures, our evaluation is based on two comprehensive benchmarks: 1) \textit{\textbf{Traditional}} Stable Diffusion models, including v1.4 (SDv1.4), v1.5 (SDv1.5), v2.1-turbo (SDv2.1), and XL 1.0 (SDXL). 2) \textit{\textbf{Flow-matching}}-based models such as SD 3.5-medium (SD3.5) and Flux. The experiments were carried out on a single node with 8 NVIDIA Tesla H20 GPUs.

% TODO: mention we use the same LAION aesthetics predictor as in~\cite{black2023training}

\subsection{Experiments on Pre-defined RL Objectives}
\label{RLobj}
% 在本小节中，我们对比了 \ourmethod{} 与几种新颖的强化学习基线方法在优化预设 RL 目标任务上的表现。基线方法包括采用稀疏奖励的 DDPO、DPOK 和 D3PO，以及针对稀疏奖励改进提出的密集奖励方法 xxx。
In this subsection, we compare the performance of \ourmethod{} with several novel RL baselines for optimizing predefined RL objectives. The baseline methods include DDPO, DPOK, D3PO, and TDPO, which use sparse rewards to guide the optimization process.\\
% involving human
% , as well as the dense reward method xxx proposed as an improvement for sparse rewards.
% 微调过程中奖励变化曲线如图xx所展，我们的方法在美学质量、可压缩性和不可压缩性三方面均显著优于基线方法。在相同的优化步数下，我们的方法奖励曲线上升更快，取得了最高的分数。
% 这些优势得益于\ourmethod{}自适应调节的内部奖励机制，动态平衡探索与利用的比例，提升了生成结果与预设目标的一致性。自适应调节机制在去噪初期侧重多样性与细节扩展，防止图像过于简单，而在后期逐步引导模型优化图像的整体结构，确保其符合真实分布。这种分阶段的调节策略有效地提升了生成图像的质量、细节和结构复杂性。图~\ref{jianbian}展示了我们方法优化过程中的图片变化过程。
\textbf{Query-wise learning performance.} Fig.~\ref{fig:main} shows the curves of rewards on different downstream objectives during the fine-tuning process, where we align each method via query number. The results indicate that our method significantly outperforms baseline methods in terms of aesthetic quality and human preference (PickScore) across two popular datasets, i.e., Pick-a-Pic and HPSv2. Our method's reward curve rises faster with the same optimization steps, achieving the highest scores. Then, we conduct experiments on more objectives and datasets, including Compressibility, Incompressibility, and multi-objective as rewards, and GenEval~\citep{geneval} as the dataset. The results shown in Fig.~\ref{fig:more_obj} and Fig.~\ref{fig:geneval} further demonstrate the superior reward-learning performance of our method. \\
% These advantages are due to the adaptive internal reward mechanism of \ourmethod{}, which dynamically balances the exploration-exploitation ratio, enhancing the alignment of generated results with preset objectives. The adaptive mechanism emphasizes diversity and detail expansion in the early de-noising stages, preventing overly simplistic images while gradually guiding the model to optimize the overall structure of the images in later stages to ensure consistency with the actual distribution. This phased adjustment strategy improves the generated images' quality, detail, and structural complexity. Figure~\ref{jianbian} shows the changes in images throughout our optimization process.\\
\textbf{Wall-clock time learning efficiency.} To evaluate the real-world application efficiency, we aligned each method with wall-clock time.
In Fig.~\ref{fig:wallclock}, we further consider two advanced baselines, i.e., DenseReward (DR) and B2DiffusionRL (B2Diff), with SDv14 and SDv15 as backbones. Despite deploying our method may slightly introduce further time consumption for each-query-optimization, the overall wall-clock efficiency is still significantly superior to the baselines.

\subsection{Evaluations on Mitigating Reward Hacking}
\textbf{Multi-metrics Results under Fixed Target Reward.}
To enable a fair comparison of reward hacking among different approaches, all methods were trained until their aesthetics scores approached 6.0, after which their performance was evaluated using the remaining metrics. As reported in Tab.~\ref{tab:main}, we additionally include the original SDv1.5 model as a reference point. The results show that, when AES is adopted as the optimization objective, our approach consistently achieves top-two performance across all evaluation criteria.  This demonstrates that \ourmethod{} avoids overfitting to the training reward and maintains strong generalization across evaluation metrics.\\
\textbf{Results under multi-intensities Target Reward.}
In addition to assessing final generation quality, we systematically analyze reward hacking effects across various reward learning regimes for different baseline models. Fig.~\ref{fig:hacking} demonstrates that our method is largely robust to reward hacking when AES values are below 7. Although minor performance degradation appears during the late optimization stage, which can be attributed to inevitable over-optimization of the reward, our method maintains superior performance compared to all baselines throughout training.

\subsection{Ablation study}
\label{ablation}

Since we include three stages with individual reward design based on the denoising nature of diffusion models, in Tab.~\ref{tab:variant_comparison}, we introduce Ablation Variants Var1 to Var4 to discuss the influence of reward for each stage (\textit{i.e.}, S1, S2, S3). We also introduce a fixed strategy that implements fixed exploration reward at every stage (Fixed). It can be observed that the reward for the chaotic stage (S1) can slightly enhance the reward learning performance. The reward for the stable performance optimization stage contributes the most to reward learning and generative diversity, while still being less promising in semantic alignment. The reward for the final convergence stage improves the alignment performance while maintaining the other metrics.
Our results underscore not only the necessity of each proposed component but also the remarkable performance that emerges from their integration.

\begin{figure}[t]
    \centering
    \includegraphics[width=1\linewidth]{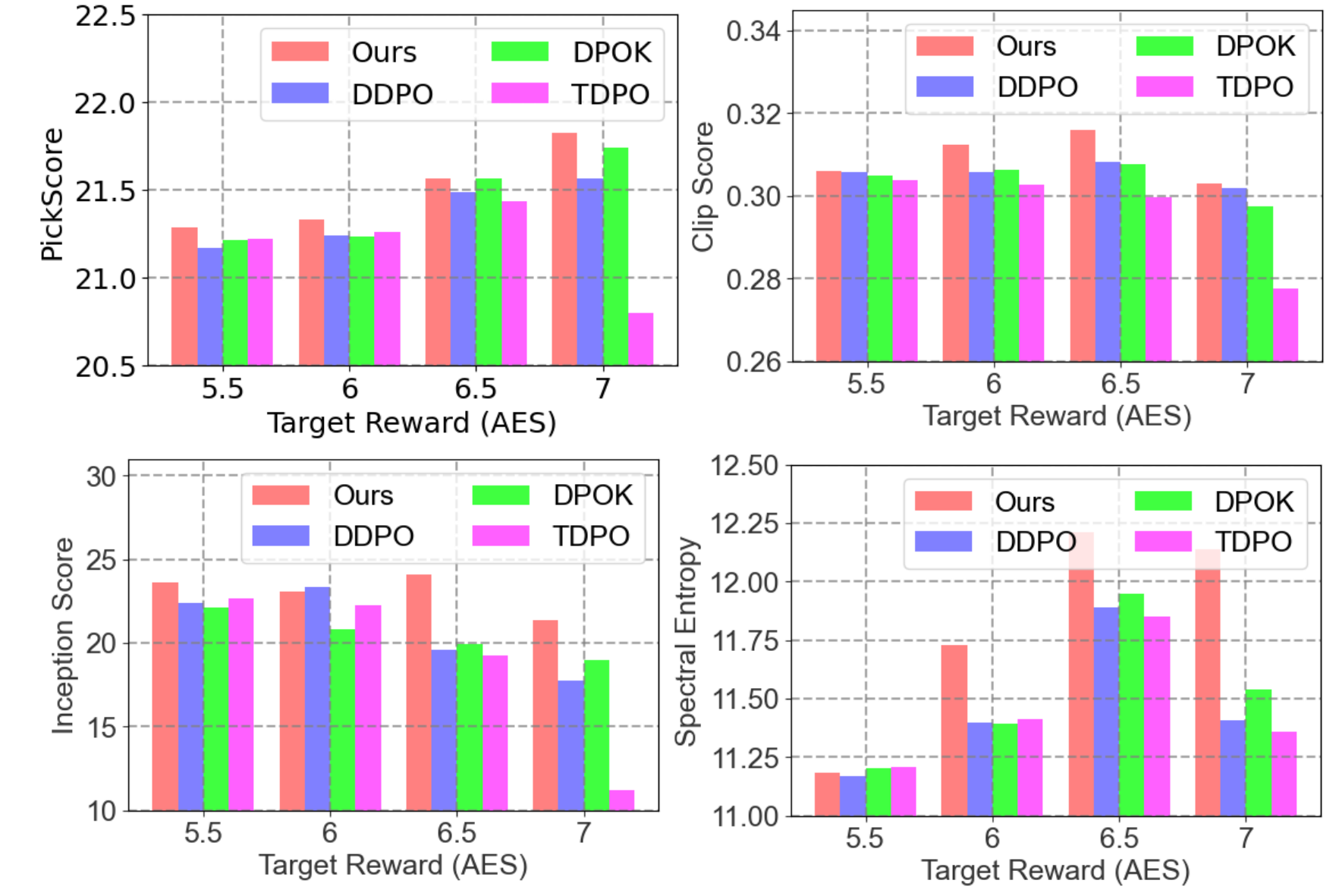}
    \caption{Evaluation on reward hacking with multi-intensities of Target Reward (AES).}
    \label{fig:hacking}
\end{figure}
\begin{table}[t]

\setlength{\tabcolsep}{2.9pt} % 可根据需求微调
\centering
\caption{\textbf{Ablation Study on Each Proposed Component.} The ablation variants across several metrics on \textit{HPSv2 Dataset}, with PS as the target reward. `Top2' indicates second-best performance. Time refers to the training time (hours) to reach PS=22.2.}
\label{tab:variant_comparison}
\begin{tabular}{l|ccc|cccccc}
\toprule
Variants &    S1&S2&S3&AES$\uparrow$ & CLIP$\uparrow$ & IS$\uparrow$ & SE$\uparrow$ & Time$\downarrow$  & Top2 \\
\midrule
Base     &    -&-&-&5.310 & 0.239 & 21.91 & 11.01 & 10.41 &0\\
 Fixed& -& -& -& 5.674& 0.241& 22.59& 11.73& 8.02&0\\
\midrule
Var1
&    \checkmark&&&5.353& 0.238& 22.04& 10.98& 9.27&0\\ 
Var2
&    &\checkmark&&5.732& 0.239& 22.61& 11.32& 8.95 &0 \\
Var3&    \checkmark&\checkmark&&\underline{5.765}& 0.241& \underline{23.49}& 11.20& \underline{6.81}&3 \\
 Var4& & \checkmark& \checkmark& 5.730& \textbf{0.244}& 22.19& \underline{11.92}& 7.69&2\\ \midrule
\rowcolor{blue!10}\textbf{Ours}    &    \checkmark&\checkmark&\checkmark&\textbf{5.808}& \textbf{0.244}& \textbf{23.92}&\textbf{ 11.97}& \textbf{6.59}&5 \\
\bottomrule
\end{tabular}
\end{table}

\begin{figure}[htbp]
    \centering
    \includegraphics[width=1\linewidth]{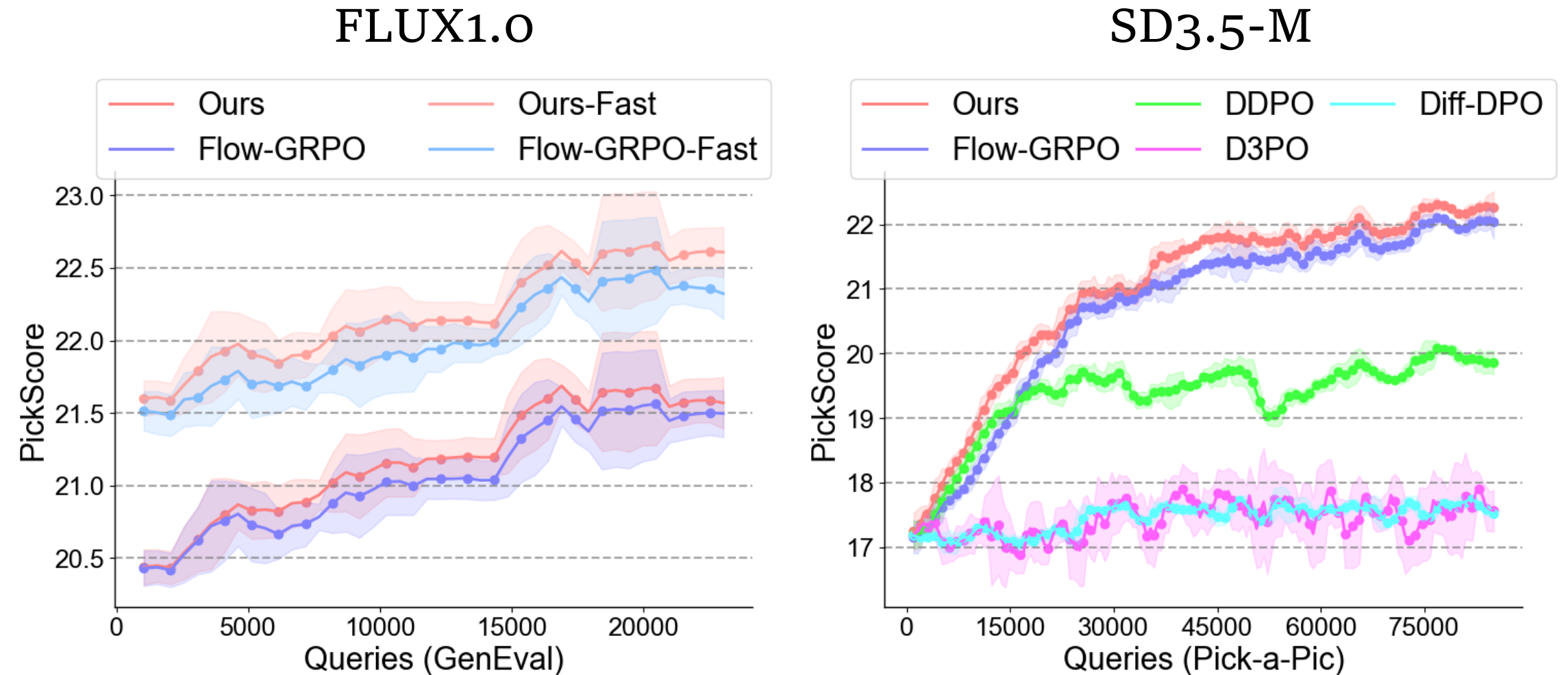}
    \caption{Results on the advanced \textbf{flow-matching-based} diffusion models comparing with different baselines.}
    \label{fig:flow}
\end{figure}
\begin{figure}[htbp]
    \centering
    \includegraphics[width=1\linewidth]{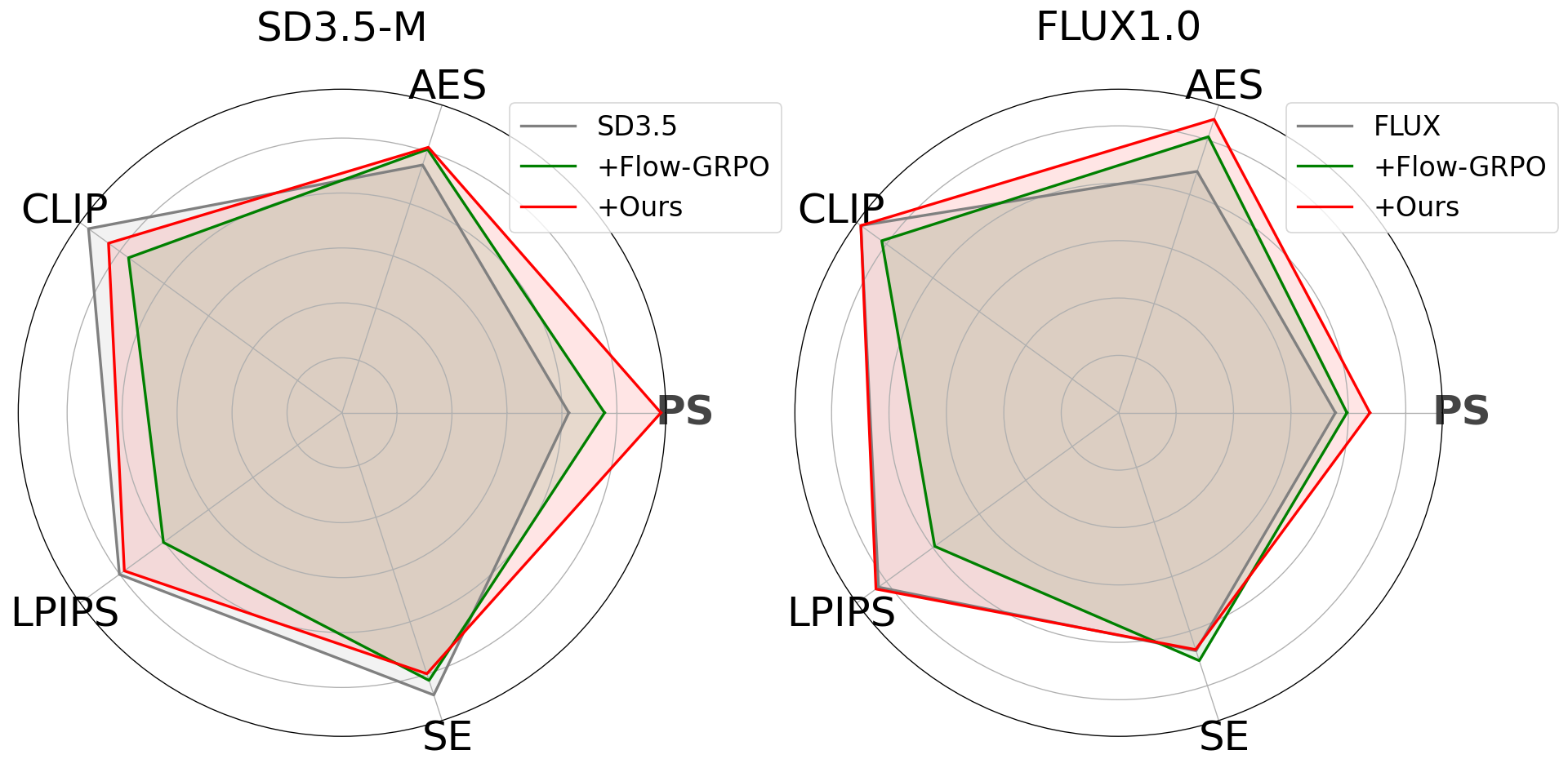}
    \caption{ \textbf{Reward Learning and Hacking Performance on SD3.5 and FLUX}. All baselines are trained with the same 240 epochs and evaluated on the unseen test set. \textit{PS is treated as the training reward for optimization (marked as gray)}. The remaining metrics are evaluated after training.}
\label{fig:hack_sd35_flux}
\end{figure}
\begin{figure}[htbp]
    \centering
    \includegraphics[width=1\linewidth]{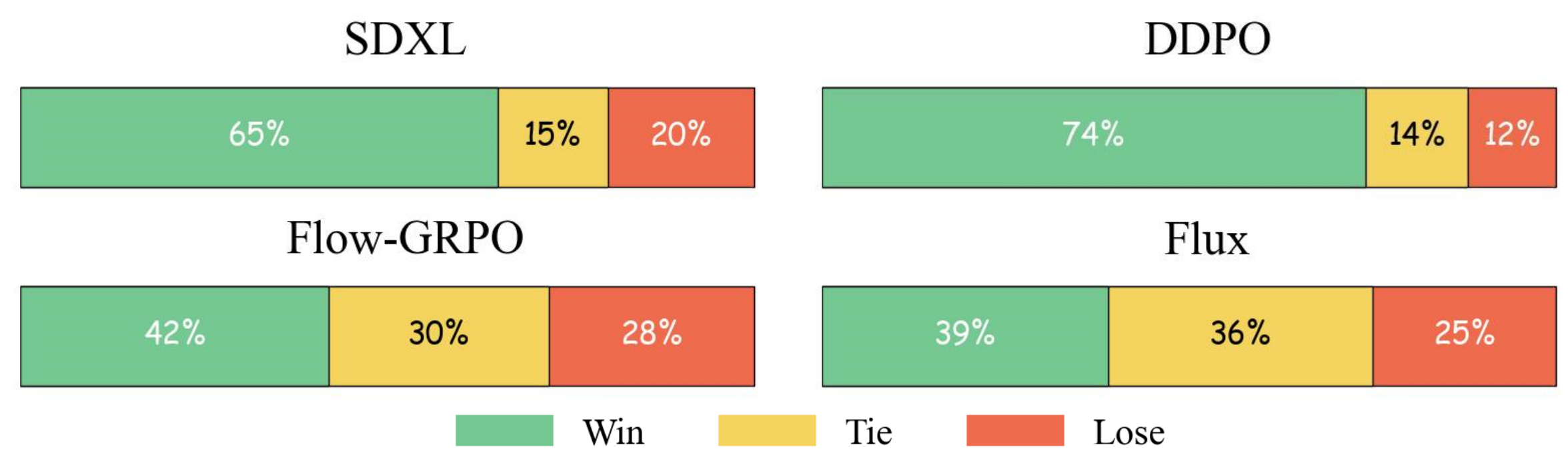}
    \caption{User study of our method versus other baselines.}
    \label{fig:user}
\end{figure}

\subsection{Results on Flow-Matching-Based Models}
As shown in Fig.~\ref{fig:flow} and Fig.~\ref{fig:hack_sd35_flux}, we implement our method for the advanced flow-matching-based diffusion models, namely FLUX and SD3.5. We reproduce supervised fine-tuning as D3PO, Flow-PPO as DDPO, and Flow-DPO as Diffusion-DPO (Diff-DPO). As shown in Fig.~\ref{fig:flow}, we successfully enhance the performance of Flow-GRPO, achieving the SoTA reward learning performance on both models. Notably, we then investigate the performance of the reward hacking issue by evaluating on the unseen set. As shown in Fig.~\ref{fig:hack_sd35_flux}, we train all models on pick-a-pic train set with 360 epochs and evaluate on the test set. Our results can effectively maintain the semantic alignment and generative diversity compared to the backbone results, while surpassing the baseline Flow-GRPO in all five metrics.
% \subsection{Experiments on diversity improvement}
\subsection{Qualitative Evaluation}
\textbf{Generation diversity comparison.}
In Fig.~\ref{fig:diverstiy}, SGPO yields more diverse samples than the baselines under the same training budget. The improvement can be observed in both global attributes (e.g., object shape and viewpoint), and local details (e.g., color and texture). In contrast, the baselines often collapse to a few repeated high-reward patterns, resulting in similar backgrounds and subject layouts, which are consistent with the quantitative diversity results.\\
\textbf{Controlled generation.}
Fig.~\ref{ijcai_color} compares controllability on prompts involving object category, quantity, color, and spatial position. SGPO follows these compositional constraints more reliably, while the baselines often satisfy only part of the prompt. Typical failure cases include correct object types with incorrect counts, or correct colors with mismatched positions.\\
\textbf{Complex prompt alignment and OCR generation.}
In Fig.~\ref{fig:alig}, SGPO performs better on long\&complex prompts that require preserving entity relations, scene semantics, and fine-grained textures. Compared with the baselines, it shows less semantic drift and produces more coherent layouts. In Text-related cases, SGPO also generates clearer and more contextually appropriate text content.

\subsection{User Study}
To subjectively evaluate the visual quality of the generated images, we recruited human subjects to assess outputs from our method and various baselines, all generated with the same prompt and random seed. We employed the Pick-a-Pic dataset and adopted PickScore as the target reward metric. For the baselines, we considered the original FLUX 1.0, the original SDXL, FLUX 1.0 fine-tuned with Flow-GRPO, and SDXL fine-tuned with DDPO. As shown in Fig.~\ref{fig:user}, our method consistently achieved a high win rate across all four settings, demonstrating the superiority of the images generated by our approach in terms of perceived visual quality.

\section{Conclusion}
% 本文首次在RL微调中引入内在动机，解决了现有模型中因稀疏和延迟奖励导致的优化不足和奖励劫持问题。
% 本文提出\ourmethod{}方法，设计了一种独立于下游任务的通用内在奖励构建机制，从细粒度层面鼓励探索以加速去噪优化和增加生成多样性。此外，为实现了在RL微调扩散模型中生成质量与多样性和最大化预设任务目标之间的动态平衡，本文提出内部奖励的自适应调节机制，该机制基于图像去噪程度实时调整内在奖励的强度与应用时机，从而科学平衡探索与利用。实验结果表明，本文方法在最大化美学、压缩性和对齐性等RL目标的同时，显著提升了生成的多样性和正畸效果，表现远超现有方法。
% 本文提出的自适应内在奖励构建方法为未来基于RL微调的扩散模型优化提供了新思路。
% This paper addresses the optimization deficiencies and reward hijacking issues caused by sparse and delayed rewards in RL fine-tuning diffusion models. We propose \ourmethod{} that introduces intrinsic motivation into intermediate steps of de-noising policy optimization for the first time. To effectively complement various extrinsic rewards, we design a task-agnostic intrinsic reward construction mechanism that encourages fine-grained exploration to accelerate de-noising optimization and enhance generation diversity. Moreover, we propose an adaptive intrinsic reward adjustment mechanism to balance generation quality, diversity, and preset task objectives in RL fine-tuning. This mechanism dynamically adjusts the intensity and timing of intrinsic rewards in real time based on the degree of image de-noising, achieving a proper balance between exploration and exploitation. Experimental results demonstrate that our method significantly enhances diversity and corrective effects while maximizing RL objectives such as aesthetics, compressibility, and incompressibility, outperforming existing approaches. The adaptive intrinsic reward construction method proposed in this paper offers new perspectives for future RL fine-tuning optimization of diffusion models.
In this paper, we propose Stage-Guided Per-step Optimization (SGPO), a stage-aware RL framework that aligns optimization objectives with the dynamics of diffusion generation, i.e., chaotic stage, stable preference stage, and convergence stage. SGPO effectively improves alignment, preserves diversity, and alleviates reward hacking. Experimental results demonstrate that respecting the stage-wise structure of diffusion processes is crucial for robust and efficient RL fine-tuning.

\clearpage

\bibliographystyle{ACM-Reference-Format}

\bibliography{refer}

\clearpage

% \clearpage
% \setcounter{page}{1}
% \maketitlesupplementary
% \setcounter{section}{0}

% ----- Supplementary -----
\clearpage
\setcounter{tocdepth}{2}     % 让 section/subsection 放入目录
\appendix
\section*{Appendix}

\section{Discussion of More Related Work}
\label{related_work_add}

\begin{figure}
    \centering
    \includegraphics[width=1\linewidth]{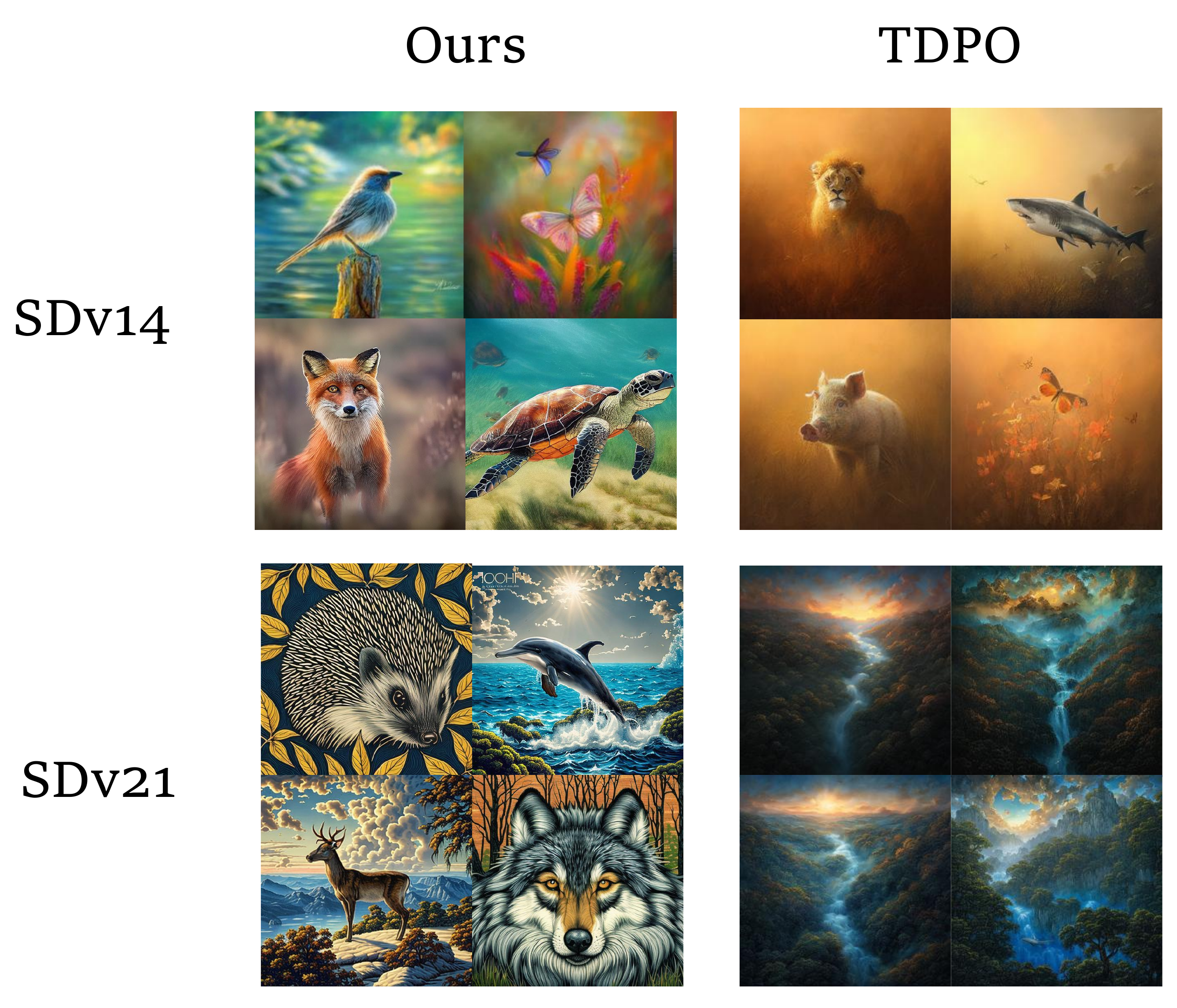}
    \caption{Visual Impression of Reward Hacking. With similar reward scores, the generated images can still be diversified and aligned (Ours), while the reward hacking leads to over-fitting and significantly undermines the semantic information and diversity (TDPO).}
    \label{fig:hack_impre}
\end{figure}

\begin{figure}[t]
        \centering
        \includegraphics[width=1\linewidth]{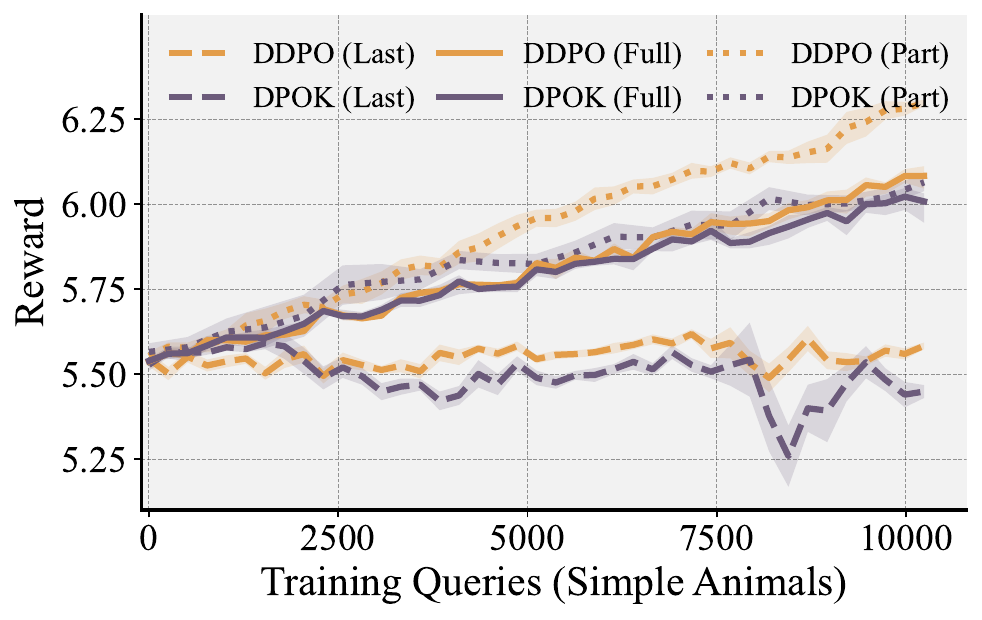}
        \caption{The comparing results of Backpropagating final reward to 1) \textbf{Last}: the last step. 2) \textbf{Full}: full trajectory (widely adopted by existing methods). 3) \textbf{Part}: only the stable middle stage.}
        \label{fig:rewardback}
\end{figure}%
\subsection{Reinforcement Learning Fine-tuning for Diffusion Models}
Recent advances have explored reinforcement learning~\citep{kaelbling1996reinforcement,chaudhari2025rlhf,zhang2025survey,tang2025deep,pippas2025evolution,balhara2025survey,michailidis2025reinforcement} as an effective paradigm for fine-tuning diffusion models toward specific objectives, such as aesthetic quality, human preference alignment, or task-specific constraints. Unlike standard diffusion training, which optimizes likelihood-based objectives, RL-based fine-tuning enables direct optimization of reward functions~\citep{lamba2025alignment,wagenmaker2025steering,zhao2022alphaholdem,gan2024reflective,schulman2017proximal,sampa2026reinforcement}.

Most existing approaches formulate diffusion sampling as a sequential decision-making process, where each denoising step corresponds to an action, and a preference model or evaluator assigns a reward to the final generated sample. To address the extreme reward sparsity inherent in this setup, these methods typically propagate the final-step reward backward to all denoising steps and apply policy gradient updates across the entire trajectory. This strategy has been shown to improve alignment performance in practice.

However, such approaches implicitly treat all denoising steps as equally relevant for reward optimization, overlooking the intrinsic stage-wise structure of the diffusion generation process. As discussed in recent studies~\citep{choi2022perception,li2023autodiffusion,yi2024towards,xie2025dymo}, this simplification can lead to optimization inefficiencies, reward hacking, and degradation of sample diversity, motivating the need for more structured and dynamics-aware reinforcement learning formulations.

\subsection{Reward Hacking}
\label{reward_hacking_appendix}
Reward hacking~\citep{eisenstein2023helping,skalse2022defining,pan2024feedback,laidlaw2024correlated,hadfield2017inverse} is a well-known failure mode in reinforcement learning, where an agent maximizes the reward function through unintended shortcuts rather than achieving the intended task objective. This phenomenon typically arises when the reward function is misaligned with the true optimization goal, leading the policy to exploit spurious correlations rather than learn meaningful behaviors. In Fig.~\ref{fig:hack_impre}, we show the typical hacking impressions on a simple animal prompt set after reward optimization.

In the context of diffusion model fine-tuning with reinforcement learning, reward hacking becomes particularly pronounced due to the sparse and delayed nature of preference rewards~\citep{fan2024reinforcement}. Most existing approaches compute a single reward only at the final generation step and propagate it backward to all denoising steps. This reward-backpropagation strategy implicitly assumes that all intermediate denoising states are equally informative and can be guided by a single reward signal. However, such an assumption ignores the intrinsic stage-wise dynamics of diffusion generation.

As a result, repeatedly optimizing an identical reward across consecutive denoising steps can encourage the model to discover shortcuts that improve the accumulated reward over time without genuinely improving generation quality. These shortcuts often manifest as mode collapse, repetitive structures, or over-amplified local patterns, leading to reward hacking and degradation of generative diversity. Reward hacking in diffusion model RL fine-tuning is not merely caused by reward sparsity, but fundamentally rooted in the temporal homogenization of rewards induced by indiscriminate reward backpropagation~\citep{zhai2025mira,clark2023directly,rafailov2024scaling,bai2025dragon}.

\subsection{Proof}
\textbf{Theorem 3.1}
\textit{If the diffusion model is perfectly trained and the reverse generation follows the score-based SDE,  $\hat{\mathbf{x}}^t_0$ denotes the posterior distribution of $\mathbf{x}_0$ at time $t$, then $\operatorname{Var}(\hat{\mathbf{x}}^t_0 \mid \mathbf{x}_t)$ is a decreasing function during the generation process.}

\begin{proof}
    We consider the forward diffusion process. By the reverse-time diffusion equation models by Anderson, when the diffusion model is perfectly trained, the joint distribution of the forward process is identical to that of the reverse generative process. Therefore, we can equivalently analyze the posterior variance in the generative process using quantities from the forward process.

Consider the forward diffusion process defined as:
\begin{equation}
	\mathbf{x}_t = \sqrt{\bar{\alpha}_t}\,\mathbf{x}_0 + \sqrt{1-\bar{\alpha}_t}\,\boldsymbol{\epsilon}_t, \qquad \boldsymbol{\epsilon}_t \sim \mathcal{N}(\mathbf{0}, \mathbf{I}),
\end{equation}
where $\bar{\alpha}_t = \prod_{k=1}^t (1-\beta_k)$ is the cumulative decay coefficient, which decreases as $t$ increases. Assume the prior distribution is Gaussian:
\begin{equation*}
	p(\mathbf{x}_0) = \mathcal{N}(\mathbf{0}, \sigma_0^2 \mathbf{I}),
\end{equation*}
The conditional distribution is given by the forward process:
\begin{equation*}
	p(\mathbf{x}_t \mid \mathbf{x}_0) = \mathcal{N}\!\left(\mathbf{x}_t; \sqrt{\bar{\alpha}_t}\,\mathbf{x}_0,\; (1-\bar{\alpha}_t)\mathbf{I}\right).
\end{equation*}

By Bayes' theorem, the posterior distribution is proportional to the product of the prior and the likelihood:
\begin{equation*}
	p(\mathbf{x}_0 \mid \mathbf{x}_t) \propto p(\mathbf{x}_0)\, p(\mathbf{x}_t \mid \mathbf{x}_0).
\end{equation*}

Thus, the posterior covariance matrix is
\begin{equation*}
	\boldsymbol{\Sigma}_{\text{post}}(t) = \left( \frac{1}{\sigma_0^2}\mathbf{I} + \frac{\bar{\alpha}_t}{1-\bar{\alpha}_t}\mathbf{I} \right)^{-1}
	= \frac{1}{\frac{1}{\sigma_0^2} + \frac{\bar{\alpha}_t}{1-\bar{\alpha}_t}}\,\mathbf{I}.
\end{equation*}
Let $\sigma_{\text{post}}^2(t)$ denote the scalar variance of the posterior (independent and identically distributed across components); then
\begin{equation}\label{eqe1}
	\sigma_{\text{post}}^2(t) = \frac{1}{\frac{1}{\sigma_0^2} + \frac{\bar{\alpha}_t}{1-\bar{\alpha}_t}} = \frac{1-\bar{\alpha}_t}{\frac{1-\bar{\alpha}_t}{\sigma_0^2} + \bar{\alpha}_t}.
\end{equation}

Since $\bar{\alpha}_t$ is a strictly decreasing function of $t$, $\bar{\alpha}_t$ decreases as $t$ increases. In the \textbf{generative process} (reverse process), time $t$ decreases from $T$ to $0$, i.e., $t$ becomes smaller. We need to examine the trend of $\sigma_{\text{post}}^2(t)$ as $t$ decreases (the generation proceeds).

Let $u = \bar{\alpha}_t$; then $u \in (0,1]$ and decreases as $t$ increases (i.e., increases as generation proceeds). Rewrite Eqn.(\ref{eqe1}) as a function of $u$:
\begin{equation*}
	f(u) = \frac{1-u}{\frac{1-u}{\sigma_0^2} + u} = \frac{1-u}{u + \frac{1-u}{\sigma_0^2}}.
\end{equation*}
Compute the derivative of $f(u)$ with respect to $u$:
\begin{equation*}
	f'(u) = \frac{-\bigl(u + \frac{1-u}{\sigma_0^2}\bigr) - (1-u)\bigl(1 - \frac{1}{\sigma_0^2}\bigr)}{\bigl(u + \frac{1-u}{\sigma_0^2}\bigr)^2} = -\frac{1}{\bigl(u + \frac{1-u}{\sigma_0^2}\bigr)^2} < 0.
\end{equation*}
This shows that $f(u)$ is strictly decreasing in $u$. Since during the generative process $u = \bar{\alpha}_t$ gradually increases (because $t$ decreases, $\bar{\alpha}_t$ increases), $f(u)$ strictly decreases as generation proceeds. That is,
\begin{equation*}
	\frac{d}{dt_{\text{gen}}} \sigma_{\text{post}}^2(t) < 0,
\end{equation*}
where $t_{\text{gen}}$ denotes the generation progress (from $T$ to $0$). Therefore, the posterior variance $\operatorname{Var}(\hat{\mathbf{x}}^t_0 \mid \mathbf{x}_t)$ decreases monotonically during the generative process.
\end{proof}

\end{CJK*}

\end{document}